\newsavebox\tboxa
\newsavebox\tboxb
\newlength\tdima
\newcommand*{\oversymb}{\mathpalette\@oversymb}
\newcommand*{\@oversymb}[2]{%
    \sbox{\tboxa}{$\m@th#1\mathrm{#2}$}%
    \setbox\tboxb\null%
    \ht\tboxb\ht\tboxa%
    \dp\tboxb\dp\tboxa%
    \wd\tboxb\wd\tboxa%
    \sbox{\tboxa}{$\m@th#1{#2}$}%
    \setlength\tdima{\the\wd\tboxa}%
    \addtolength\tdima{-\the\wd\tboxb}%
    \sbox{\tboxb}{$\m@th#1\hskip\tdima\overline{\xusebox{\tboxb}}$}%
    \rlap{\usebox\tboxb}{\usebox\tboxa}}
\newcommand*{\xusebox}[1]{\mathord{{\usebox{#1}}}}
\renewcommand{\bar}[1]{\,\oversymb{#1}}
\begin{document}
\newcommand{\defeq}{\vcentcolon=}
\newcommand{\eqdef}{=\vcentcolon}

\title{On the Linear Belief Compression of POMDPs\thanks{The research leading to these results was funded by the Engineering and Physical Sciences Research Council, UK (EPSRC) under project no. EP/G069840/1 and was partially supported by the EC FP7 projects Spacebook (ref. 270019) and JAMES (ref. 270435).}
\subtitle{ A re-examination of current methods}
}

\titlerunning{On the Linear Belief Compression for POMDPs}        

\author{Zhuoran~Wang \and Paul~A.~Crook\and Wenshuo~Tang \and Oliver~Lemon}

\authorrunning{Z. Wang, P. A. Crook, W. Tang, O. Lemon} 

\institute{Z. Wang \and  W. Tang \and O. Lemon \at
              Interaction Lab, MACS, Heriot-Watt University, Edinburgh, EH14 4AS, United Kingdom\\
               \email{\{zhuoran.wang; wt92; o.lemon\}@hw.ac.uk} \\
               P. A. Crook \at Microsoft, One Microsoft Way, Redmond, WA 98052, United States\\
               \email{pacrook@microsoft.com}  
}

\date{Received: date / Accepted: date}

\maketitle

\begin{abstract}

Belief compression improves the tractability of 
  large-scale partially observable Markov decision processes (POMDPs) by finding projections from high-dimensional
  belief space onto low-dimensional approximations, where solving to obtain action selection policies
  requires fewer computations. 
 This paper develops a unified theoretical framework to analyse
  three existing linear belief compression approaches, including value-directed compression and two 
  nonnegative matrix factorisation (NMF) based algorithms. The results indicate that all the three
  known belief compression methods have their own critical deficiencies. Therefore, projective NMF belief compression
 is proposed (P-NMF), aiming to overcome the drawbacks of the existing techniques.
  The performance of the proposed algorithm is examined on four POMDP problems of reasonably 
  large scale, in comparison with existing techniques. Additionally, the competitiveness of
  belief compression 
 is compared empirically to a state-of-the-art heuristic search-based POMDP solver and their
 relative merits in solving large-scale POMDPs are investigated.
  
\keywords{Belief Compression \and POMDP \and Nonnegative Matrix Factorisation}

\end{abstract}

\section{Introduction}

Making decisions in dynamic environments is one of the core problems of artificial intelligence. In many cases, it requires not only evaluation of the reward or cost of the intermediate action, but also consideration of the long term effect of a sequence of choices made in the future. If the true states of the system can be perfectly identified, Markov decision processes (MDPs) can be used to handle such planning problems efficiently. However, in real-world applications, the true system states are not always fully observable. Uncertainties may come from different sources, e.g. noisy sensors in a robotic system, or speech recognition errors in a spoken dialogue system, etc., which motivates the utilisation of probabilistic techniques to track the system states.

The partially observable Markov decision process (POMDP) has been proven to be a powerful tool for modelling sequential decision making problems under uncertainty. It generalises the standard MDP to the case where an agent cannot directly observe the underlying states but has to maintain a probability distribution  (called a belief) over all possible states based on noisy observations. The optimal policy of a POMDP then specifies an action for each possible belief to maximise expected discounted future reward. 

However, the exact solution for the policy optimisation problem of a POMDP is computationally intractable \citep{Cassandra97}. Polynomial-time approximation algorithms can be achieved using point-based value iteration (PBVI) techniques \citep{Pineau03}, which successively estimate the value function by updating the value and its gradient only at the points of a witness point set. In this case, the dimensionality of the belief space dominates the efficiency of the algorithms (see  \citep{Pineau03,Spaan05,Smith05}). 

Factored POMDPs explored in various previous studies (see below) provide a general direction for improving the tractability of large-scale problems via dimension reduction. The essential idea behind the factorisation is to decompose the original instantiations of the state, action and observation variables in a POMDP into their respective smaller sets of factor variables. Then conditional independence or context-specific independence among those factor variables can be exploited to achieve a more compact representation, using corresponding techniques such as dynamic Bayesian networks \citep{Williams05,Thomson10,Hoey10}, decision trees \citep{Boutilier96,Boutilier00} or algebraic decision diagrams \citep{Hanse00, Shani08}. Unfortunately, such factored representations do not necessarily result in efficient policy implementations. Although for particular types of POMDP problems we will also be able to express the transition, observation and reward functions in a compact form with respect to their respective factor variables and optimise the policies in lower-dimensional spaces \citep{Ong10,Sim08,Poupart05}, this method does not generalise to all domains by default. 

Belief compression provides an alternative solution to reduce the computation cost for POMDP policy optimisation by projecting the high-dimensional belief space into a low-dimensional one, using an automatically obtained projection basis. Main contributions in this area include exponential family principal component analysis (EPCA)-based compression \citep{Roy05}, value-directed compression (VDC) \citep{Poupart02,Poupart05} and  nonnegative matrix factorisation (NMF)-based compression \citep{Li07,Theocharous10}. The EPCA approach achieves a non-linear compression by exploiting the sparsity of the belief space based on sampled beliefs, whilst VDC produces a linear projection in a value-directed manner such that a belief and its compression will obtain an (approximately) identical value. Due to the nature of linear projection, VDC has the advantage that the piecewise linear and convex (PWLC) property of the value function remains after the compression, hence PBVI algorithms can be directly applied to solve the compressed POMDPs. Benefiting from the insights behind both EPCA and VDC, the NMF-based algorithms seek a projection basis (also based on sampled beliefs) that yields a low-rank approximation of the belief space, and use it to construct a linear compression to preserve convenience for PBVI. 

In this paper, after reviewing some background knowledge of POMDPs (\S 2), we develop a unified theoretical framework to analyse linear belief compression algorithms in general (\S 3). To the best of our knowledge, such an analysis has not been reported before. After this, the results are employed to examine three existing linear POMDP compression algorithms separately, including VDC \citep{Poupart05}, orthogonal NMF (O-NMF) compression \citep{Li07}, and locality preserving NMF (LP-NMF) compression \citep{Theocharous10} (which are the only three linear belief compression methods that we are aware of). Our findings show that all the three existing models have their own critical deficiencies. For VDC (\S4), not only can the compressed value function violate the contractive property of a valid Bellman recursion \citep{Bellman57}, and therefore can diverge to infinity in the worst case (even when the compression error is extremely small), but also the lack of nonnegativity constraints on its compression basis can confuse the pruning procedure in PBVI and drive the algorithm to an ill converging point. 
On the other hand, both the O-NMF and LP-NMF approaches share the common drawback that compression error does not directly relate to value loss (\S5), which results in good compressions not necessarily leading to promising policies.
Therefore, a novel projective NMF belief compression algorithm (P-NMF) is proposed (\S6), aiming to revise the deficiencies of the existing techniques.   
Experimental results on four POMDP problems of reasonably large scale show that the proposed model outperforms the existing techniques (\S7.1). In addition, we also investigate the practical effectiveness of belief compression in solving large-scale POMDPs, in comparison with a state-of-the-art (uncompressed) POMDP solver called SARSOP \citep{Kurniawati08} (\S7.2), before we conclude (\S8).


\section{POMDP Basics}

A POMDP is a tuple $\langle \mathcal{S},\mathcal{A},\mathcal{Z},T,\Omega,R,\eta\rangle$, where the components are defined as follows. $\mathcal{S}$, $\mathcal{A}$ and $\mathcal{Z}$ are the sets of states, actions and observations respectively. The transition function $T(s'|s,a)$ defines the conditional probability of transiting from state $s\in \mathcal{S}$ to state $s'\in \mathcal{S}$ after taking action $a\in \mathcal{A}$. The observation function $\Omega(z|s,a)$ gives the probability of the occurrence of observation $z\in \mathcal{Z}$ in state $s$ after taking action $a$. $R(s,a)$ is the reward function specifying the immediate reward of a state-action pair. Whilst, $0<\eta<1$ is a discount factor. In this paper, we will focus on POMDPs with discrete state, action, and observation spaces.

A standard POMDP operates as follows. At each time step, the system is in an unobservable state $s$, for which only an observation $z$ can be received. A distribution over all possible states is therefore maintained,  called a belief, denoted by  $b$, where the probability of the system being in state $s$ is $b(s)$. Based on the current belief, the system selects an action $a$, receives a reward $R(s,a)$ and transits to a new (unobservable) state $s'$ where it receives an observation $z'$. Then the belief is updated to $b'$ based on $z'$ and $a$ as follows:
\begin{eqnarray}\label{equation:belief}
b'(s')=\mathrm{Pr}(s'|z',a,b)=\frac{1}{\mathrm{Pr}(z'|a,b)}\Omega(z'|s',a)\sum_s T(s'|a,s)b(s)
\end{eqnarray}
where $\mathrm{Pr}(z'|a,b)=\sum_{s'}\Omega(z'|s',a)\sum_s T(s'|a,s)b(s)$ is a normalisation factor. 
\subsection{Policy and Value Function}
A policy $\pi$ is defined as a mapping that maps each belief $b$ to an action $a=\pi(b)$.
The value function of a given policy $\pi$ and given starting point $b_0$ is the expected sum of discounted rewards, calculated as:
\begin{eqnarray}
V^{\pi}(b_0)=E\left[\sum_{t=0}^n\eta^tr_{\pi(b_t)}(b_t)\right]
\end{eqnarray}
where $n$ is the planning horizon (possibly inifinite) and $r_{\pi(b_t)}(b_t)$ is the immediate reward obtained at time $t$ using policy $\pi$. The objective of POMDP-based planning is to determine an optimal policy $\pi^*=\arg\max_{\pi}V^{\pi}(b)$  that maximises the value function. The value function corresponding to $\pi^*$ is usually denoted by $V^*$.

\subsection{Value Iteration}
Commonly used policy optimisation algorithms include value iteration, policy iteration and linear programming. In this paper, we will focus on value iteration related techniques. Exact value iteration recursively computes the optimal value function $V^*$ as the sequence of value functions $V_n$ starting from an initial $V_0$:  
\begin{eqnarray}\label{value-function}
V_{n+1}(b)=\max_{a\in \mathcal{A}}\left[R(b,a)+\eta\sum_{z\in \mathcal{Z}}\mathrm{Pr}(b'|b,a,z)\mathrm{Pr}(z|a,b)V_{n}(b')\right]
\end{eqnarray}
where $\mathrm{Pr}(b'|b,a,z')$ is an indicator of $b$ updating to $b'$ on action $a$ and observation $z'$. If we represent $R$ in matrix form, where $R(b,a)=b^{\top} R_{\cdot,a}$, and define the mapping $T^{a,z}$ such that $T^{a,z}_{ij}=T(s_j|a,s_i)\Omega(z|a,s_j)$, Eq. (\ref{value-function}) can be re-written as:
\begin{eqnarray}\label{value-function-matrix}
V_{n+1}(b)\defeq\max_{a\in \mathcal{A}}\left[b^{\top} R_{\cdot,a}+\eta\sum_{z\in \mathcal{Z}}V_n(b^{\top} T^{a,z})\right]
\end{eqnarray}
Such a recursion is usually expressed in functional form:
\begin{eqnarray}
V_{n+1}&=&HV_n\label{bellman}
\end{eqnarray}
where $H$ is the Bellman backup operator \citep{Bellman57}. Eq. (\ref{bellman}) is also called a Bellman equation or a Bellman recursion.

The above recursion converges to $V^*$, and can be represented as a piece-wise linear convex (PWLC) function:
\begin{equation}\label{value-function-alpha}
V(b)=\max_{\alpha\in\Gamma} b^{\top}\alpha
\end{equation}
where $\Gamma$ is a set of vectors called $\alpha$-vectors, with each $\alpha$-vector associated with an action, such that the action corresponding to the $\alpha$-vector maximising the value function at the current belief $b$ is the one executed by the underlying policy $\pi$.

Value iteration can then be implemented as a dynamic programming procedure to iteratively construct the $\alpha$-vectors. Furthermore, the exact Bellman backup operator $H$ can be approximated with tractable computations by considering only a finite set of sampled belief points instead of the entire reachable belief space. This is known as point-based value iteration (PBVI). Detailed introductions to PBVI algorithms are omitted in this paper, but some commonly used techniques can be found in \citep{Pineau03,Smith05, Spaan05}. 

\section{Linear Belief Compression}

We start the discussion from an ideal case where we assume that a lossless compression is achievable.
Linear belief compression can be summarised as finding a linear function $F\in\mathbb{R}^{n\times k}$ such that:
\begin{eqnarray}\label{compression}
R=F\tilde{R} \ \ \ \ \mathrm{and} \ \ \ \ T^{a,z}F=F\tilde{T}^{a,z}\ \ \ \forall a\in \mathcal{A},z\in \mathcal{Z}
\end{eqnarray}
where $n=|\mathcal{S}|$ is the dimension of the state space in the original POMDP,  $k\ll n$ is the compressed state space size, and $\tilde{R}$ and $\tilde{T}^{a,z}$ are the compressed reward and transition matrices, and can be computed as:
\begin{eqnarray}\label{compression-solution}
\tilde{R}=F^{\dag}R \ \ \ \ \mathrm{and} \ \ \ \ \tilde{T}^{a,z} = F^{\dag}T^{a,z}F\ \ \ \forall a\in \mathcal{A},z\in \mathcal{Z}
\end{eqnarray}
where $F^{\dag}\in\mathbb{R}^{k\times n}$ is some certain form of `inverse' of $F$. Here we temporarily keep this notation general, and leave its specifications for different models explained in later sections.
Let $\tilde{b}$ be the compressed belief, and:
\begin{equation}\label{compressed-belief}
\tilde{b}^{\top}=b^{\top}F
\end{equation}
For a given policy $\pi$, the value function $\tilde{V}^{\pi}$ defined for the compressed problem can then be written as:
\begin{equation}\label{compressed-value}
\tilde{V}^{\pi}(\tilde{b})=\tilde{b}^{\top} \tilde{R}_{\cdot,\pi(\tilde{b})}+\eta\sum_z\tilde{V}^{\pi}(\tilde{b}^{\top} \tilde{T}^{\pi(\tilde{b}),z})
\end{equation}

The underlying theory behind lossless linear belief compression was initially proposed by \cite{Poupart02}. We quote their theorem and proof here for convenience of further discussion.
\begin{theorem}[\citeauthor{Poupart02}]

Let $\mathcal{B}$ denote the set of all reachable beliefs for a POMDP.
Let $F$, $\tilde{R}$ and $\tilde{T}^{a,z}$ satisfy Eq. (\ref{compression}), then $V^{\pi}(b)=\tilde{V}^{\pi}(\tilde{b})$, $\forall \pi,b\in\mathcal{B}$.

\end{theorem}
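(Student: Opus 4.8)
The plan is to prove the identity by induction on the planning horizon and then pass to the limit, reducing everything to two algebraic consequences of the compression constraints in Eq.~(\ref{compression}). The key structural fact is that the projection $\tilde b^{\top}=b^{\top}F$ \emph{intertwines} the original and compressed one-step dynamics, so that ``compress then back up'' and ``back up then compress'' coincide; once this is established, the equality of value functions follows by a routine induction.

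First I would record the two commutation identities. For the immediate reward, using $R=F\tilde R$,
\[ \tilde b^{\top}\tilde R_{\cdot,a}=b^{\top}F\tilde R_{\cdot,a}=b^{\top}(F\tilde R)_{\cdot,a}=b^{\top}R_{\cdot,a}, \]
so the one-step reward is preserved exactly under compression. For the belief update, using the constraint $T^{a,z}F=F\tilde T^{a,z}$ together with associativity,
\[ \tilde b^{\top}\tilde T^{a,z}=b^{\top}F\tilde T^{a,z}=b^{\top}(T^{a,z}F)=(b^{\top}T^{a,z})F. \]
In words, compressing the (unnormalised) successor belief $b^{\top}T^{a,z}$ and updating the compressed belief $\tilde b$ yield the same vector. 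Note that the placement of $F$ matters: the constraint reads $T^{a,z}F=F\tilde T^{a,z}$, and it is exactly this left/right pattern that lets $F$ migrate from the compressed side to the original side in the chain above.

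With these in hand I would induct on the horizon $n$ of the finite-horizon value functions $V^{\pi}_n$ and $\tilde V^{\pi}_n$, with induction hypothesis $V^{\pi}_n(b)=\tilde V^{\pi}_n(\tilde b)$ for every reachable $b\in\mathcal B$ with $\tilde b^{\top}=b^{\top}F$. The base case is immediate from the common initialisation. For the step, I would expand $\tilde V^{\pi}_{n+1}(\tilde b)$ via Eq.~(\ref{compressed-value}), replace the reward term by $b^{\top}R_{\cdot,\pi}$ using the first identity, rewrite each successor $\tilde b^{\top}\tilde T^{\pi,z}$ as $(b^{\top}T^{\pi,z})F$ using the second identity (so that the successor is the compression of the original successor $b^{\top}T^{\pi,z}$, which is again reachable), and then apply the induction hypothesis term by term. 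Matching the result against Eq.~(\ref{value-function-matrix}) specialised to the action dictated by $\pi$ closes the step. The point requiring care is that both Eq.~(\ref{value-function-matrix}) and Eq.~(\ref{compressed-value}) use \emph{unnormalised} successor beliefs; this is legitimate because $V^{\pi}$ and $\tilde V^{\pi}$ are positively homogeneous of degree one, so the per-observation normalisers $\mathrm{Pr}(z|a,b)$ are absorbed identically on both sides and never need to be carried through the induction.

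Finally I would let $n\to\infty$. Since $0<\eta<1$, the original recursion is a contraction and $V^{\pi}_n(b)\to V^{\pi}(b)$; because $\tilde V^{\pi}_n(\tilde b)=V^{\pi}_n(b)$ at every finite $n$, the compressed sequence converges to the same limit, which is precisely the fixed point characterised by Eq.~(\ref{compressed-value}). This gives $V^{\pi}(b)=\tilde V^{\pi}(\tilde b)$ for all $\pi$ and all $b\in\mathcal B$. I expect the main obstacle to be not any single step but the bookkeeping around these two points: keeping the $F$-placement consistent in the intertwining identity, and discharging the unnormalised updates through homogeneity rather than by propagating normalisation constants.
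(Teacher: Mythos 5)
Your proposal is correct and follows essentially the same route as the paper: an induction on the planning horizon in which the two constraints of Eq.~(\ref{compression}) are used to intertwine the compressed and original backups, exactly as in the paper's proof (you merely expand $\tilde V^{\pi}_{n+1}$ and migrate to $V^{\pi}_{n+1}$ where the paper goes in the opposite direction, and you add an explicit limit argument and a remark on homogeneity that the paper leaves implicit). No substantive difference.
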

\begin{proof}

Base case: let $V^{\pi}_0(b) = b^{\top}R_{\cdot,\pi(b)}$ and $\tilde{V}^{\pi}_0(\tilde{b}) = \tilde{b}^{\top}\tilde{R}_{\cdot,\pi(\tilde{b})}$, then
\begin{equation}
V^{\pi}_0(b) = b^{\top}R_{\cdot,\pi(b)}=b^{\top}F\tilde{R}_{\cdot,\pi(\tilde{b})}=\tilde{b}^{\top}\tilde{R}_{\cdot,\pi(\tilde{b})}=\tilde{V}^{\pi}_0(\tilde{b})\nonumber
\end{equation}
Induction: let $V^{\pi}_n(b)=\tilde{V}^{\pi}_n(\tilde{b})$ with $n$ stages-to-go, then
\begin{eqnarray}
V^{\pi}_{n+1}(b)&=&b^{\top} R_{\cdot,\pi(b)}+\eta\sum_zV_n^{\pi}(b^{\top} T^{\pi(b),z}) \nonumber\\
&=&b^{\top} R_{\cdot,\pi(b)}+\eta\sum_z\tilde{V}_n^{\pi}(b^{\top} T^{\pi(b),z}F)\mbox{\texttt{\hspace{1.53cm} : \scriptsize$V^{\pi}_n(b')=\tilde{V}^{\pi}_n(\tilde{b}')$}}\nonumber\\
&=&b^{\top} F\tilde{R}_{\cdot,\pi(b)}+\eta\sum_z\tilde{V}_n^{\pi}(b^{\top} F\tilde{T}^{\pi(b),z})\mbox{\texttt{\hspace{1.25cm} : \scriptsize substituting Eq. (\ref{compression})}}\nonumber\\
&=&\tilde{b}^{\top} \tilde{R}_{\cdot,\pi(\tilde{b})}+\eta\sum_z\tilde{V}_n^{\pi}(\tilde{b}^{\top} \tilde{T}^{\pi(\tilde{b}),z})=\tilde{V}^{\pi}_{n+1}(\tilde{b})\mbox{\texttt{\hspace{0.16cm} : \scriptsize substituting Eq. (\ref{compressed-belief})}}\nonumber\ \ \qed
\end{eqnarray}

\end{proof}
Theorem 1 shows that if the conditions in Eq. (\ref{compression}) hold, all policies have identical values with respect to the compressed and uncompressed POMDPs, i.e. the compression is lossless.

\subsection{A Complementary Theory of Lossless Belief Compression}
Recall the $\alpha$-vector representation of the value function in Eq. (\ref{value-function-alpha}). For a given policy $\pi$, we can express $V^{\pi}(b)=b^{\top}V^{\pi}=b^{\top}\alpha^{\pi}$, where we use $\alpha^{\pi}$ to denote the $\alpha$-vector specified by $\pi$ in computing the value of the current $b$. A similar representation is applicable to $\tilde{V}^{\pi}$ as well.  (For instance $\tilde{V}^{\pi}(\tilde{b})=\tilde{b}^{\top}\tilde{V}^{\pi}=\tilde{b}^{\top}\tilde{\alpha}^{\pi}$.)
Then by substituting Eq. (\ref{compression-solution}) and Eq. (\ref{compressed-belief}) into Eq. (\ref{compressed-value}), the value function of the compressed POMDP can be explicitly expressed in the following form:
\begin{eqnarray}
\tilde{V}^{\pi}(\tilde{b})&=&\tilde{b}^{\top}\tilde{V}^{\pi}=b^{\top}F\tilde{V}^{\pi}\label{compressed-value-explicit1} \\
&=&b^{\top} F  \tilde{R}_{\cdot,\pi(\tilde{b})}+\eta\sum_zb^{\top} F  \tilde{T}^{\pi(\tilde{b}),z}\tilde{V}^{\pi}\nonumber\\
&=&b^{\top} F F^{\dag}R_{\cdot,\pi(b)}+\eta\sum_z b^{\top} F F^{\dag}T^{\pi(b),z}F\tilde{V}^{\pi}\label{compressed-value-explicit2}
\end{eqnarray}
Let $\bar{V}^{\pi}=F\tilde{V}^{\pi}$ and $A=FF^{\dag}$, and substitute it into Eq. (\ref{compressed-value-explicit1}) and (\ref{compressed-value-explicit2}). We obtain:
\begin{equation}\label{v_bar}
\bar{V}^{\pi}(b)=b^{\top}\bar{V}^{\pi}=b^{\top} AR_{\cdot,\pi(b)}+\eta\sum_z b^{\top} AT^{\pi(b),z}\bar{V}^{\pi}
\end{equation}
Similar to the definition of the original value function $V$, the recursive function $\bar{V}$ can also be written in functional form as:
\begin{equation}\label{Bellman-A}
\bar{V}_{n+1}=\bar{H}\bar{V}_n
\end{equation}
It means that the linear belief compression defined by Eq. (\ref{compression-solution}) and (\ref{compressed-belief}) will actually result in a modified value function with the original Bellman backup operator $H$ replaced by an approximated backup operator $\bar{H}$.
After this, we can obtain the following theorem, which is more relaxed than Theorem 1.
\begin{theorem}
Let $A$ be a low-rank square matrix that can be factored into the product of two rectangular matrices (of rank $k$) as $A=FF^{\dag}$, and $\tilde{R}$ and $\tilde{T}^{a,z}$ be defined as in Eq. (\ref{compression-solution}). If there exists such an $A$ that satisfies either (i) $R=AR$ and $T^{a,z}=AT^{a,z}$, $\forall a\in\mathcal{A},z\in\mathcal{Z}$, or (ii) $b^{\top}=b^{\top}A$, $\forall b\in\mathcal{B}$,  then $V^{\pi}(b)=\tilde{V}^{\pi}(\tilde{b})$, $\forall \pi,b\in\mathcal{B}$. 
\end{theorem}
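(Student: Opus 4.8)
The plan is to reduce the claim to the single equality $V^{\pi}(b)=\bar{V}^{\pi}(b)$ for every reachable $b$, and then to prove this by induction on the number of stages-to-go, handling conditions (i) and (ii) separately. First I would record the identity $\tilde{V}^{\pi}(\tilde{b})=\bar{V}^{\pi}(b)$, which is immediate from Eq.~(\ref{compressed-value-explicit1}) and the definition $\bar{V}^{\pi}=F\tilde{V}^{\pi}$, since $\tilde{V}^{\pi}(\tilde{b})=\tilde{b}^{\top}\tilde{V}^{\pi}=b^{\top}F\tilde{V}^{\pi}=b^{\top}\bar{V}^{\pi}=\bar{V}^{\pi}(b)$. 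This holds with no assumptions, so it suffices to show $V^{\pi}(b)=\bar{V}^{\pi}(b)$. I would then write the two finite-horizon recursions in parallel, exactly mirroring the proof of Theorem~1: the original $V^{\pi}_{n+1}(b)=b^{\top}R_{\cdot,\pi(b)}+\eta\sum_z V^{\pi}_n(b^{\top}T^{\pi(b),z})$, and the modified one read off from Eq.~(\ref{v_bar}), namely $\bar{V}^{\pi}_{n+1}(b)=b^{\top}AR_{\cdot,\pi(b)}+\eta\sum_z b^{\top}AT^{\pi(b),z}\bar{V}^{\pi}_{n}$, with matching base cases at $n=0$.

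For case (i), the conditions $AR=R$ and $AT^{a,z}=T^{a,z}$ let me absorb every occurrence of $A$: the reward term becomes $b^{\top}R_{\cdot,\pi(b)}$ and each transition term $b^{\top}AT^{\pi(b),z}$ collapses to $b^{\top}T^{\pi(b),z}$. The modified recursion then becomes \emph{literally} the original recursion, so a one-line induction (base case from $AR=R$, step from the induction hypothesis) yields $\bar{V}^{\pi}_n=V^{\pi}_n$ as functions, for every $n$ and every $b$, with no reachability requirement.

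Case (ii) is where the real work lies, and I expect it to be the main obstacle. Here I may use $b^{\top}A=b^{\top}$ only on the reachable set $\mathcal{B}$, which lets me drop the leading $A$ and reduce the step to $\bar{V}^{\pi}_{n+1}(b)=b^{\top}R_{\cdot,\pi(b)}+\eta\sum_z (b^{\top}T^{\pi(b),z})\bar{V}^{\pi}_{n}$. To close the induction I must evaluate $\bar{V}^{\pi}_n$ at each successor $b^{\top}T^{\pi(b),z}$ and invoke the induction hypothesis there, but that hypothesis only holds on $\mathcal{B}$. The key observation I would stress is that $\mathcal{B}$ is closed under belief update: since $b\in\mathcal{B}$, the belief reached after action $\pi(b)$ and observation $z$ again lies in $\mathcal{B}$, so the hypothesis applies. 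Because $b^{\top}T^{\pi(b),z}$ is the \emph{unnormalised} successor, I would use linearity of both value functions to factor out the normalisation constant $\mathrm{Pr}(z\mid\pi(b),b)$, apply the hypothesis to the normalised (reachable) belief, and reassemble to recover $V^{\pi}_{n+1}(b)$.

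Finally, letting $n\to\infty$ in either case (both recursions are contractions because $0<\eta<1$, so the finite-horizon values converge to $V^{\pi}$ and $\bar{V}^{\pi}$ respectively) gives $V^{\pi}(b)=\bar{V}^{\pi}(b)$, and combining this with the identity of the first paragraph yields $V^{\pi}(b)=\tilde{V}^{\pi}(\tilde{b})$ for all $\pi$ and all $b\in\mathcal{B}$. The only delicate point throughout is the restriction of condition (ii) to $\mathcal{B}$; everything else is a direct adaptation of the inductive argument already used for Theorem~1.
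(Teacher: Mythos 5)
Your proof is correct and takes essentially the same route as the paper's: reduce $\tilde{V}^{\pi}(\tilde{b})$ to $\bar{V}^{\pi}(b)$ via Eq.~(\ref{compressed-value-explicit1}) and the definition $\bar{V}^{\pi}=F\tilde{V}^{\pi}$, then show $\bar{V}^{\pi}_n=V^{\pi}_n$ by induction after substituting condition (i) or (ii) into Eq.~(\ref{v_bar}). The paper compresses the induction into a single sentence, and you usefully supply the details it omits for case (ii) (closure of $\mathcal{B}$ under belief update and linearity to factor out the normalisation constant at the unnormalised successors); only your closing appeal to contractivity of $\bar{H}$ is superfluous and not justified in general --- the already-established equality of the finite-horizon values on $\mathcal{B}$ is what gives convergence of $\bar{V}^{\pi}_n(b)$ there.
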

\begin{proof}
By defining $\bar{V}^{\pi}_0(b)=b^{\top}AR_{\cdot,\pi(b)}$ and  $V^{\pi}_0(b) = b^{\top}R_{\cdot,\pi(b)}$,  and substituting either condition (i) or condition (ii), we can obtain $\bar{V}^{\pi}_0(b)=V^{\pi}_0(b)$. It is straightforward to induce that if $\bar{V}^{\pi}_n(b)=V^{\pi}_n(b)$ with $n$ stages-to-go, then $\bar{V}^{\pi}_{n+1}(b)=V^{\pi}_{n+1}(b)$, by substituting either condition (i) or condition (ii) into Eq. (\ref{v_bar}). Finally, substituting the definition of $\bar{V}^{\pi}$ into Eq. (\ref{compressed-value-explicit1}) gives $\tilde{V}^{\pi}_{n+1}(\tilde{b})=V^{\pi}_{n+1}(b)$. \qed
\end{proof}

\subsection{Convergence of Lossy Belief Compression}

The above discussions are essentially based on the ideal assumption that a lossless compression exists, which is usually not the case in practical POMDP problems. However, for many problems lossy belief compression can still be employed to reduce the computational complexity of policy training (and execution). Lossy belief compression is designed to seek a projection matrix $F$ by minimising some loss criteria,  but as a consequence errors will exist between $\tilde{V}^{\pi}_{n}(\tilde{b})$ and $V^{\pi}_{n}(b)$. Moreover, such errors may propagate during value iteration, and result in a significant loss in the quality of obtained policy. The error propagation problem has been studied in depth for MDPs under reinforcement learning scenarios in previous literature \citep{Farahmand10,Antos08,Munos07}. However,  their results do not directly transfer to POMDP problems due to more complex backup procedures. Hence, in this paper we only study a basic problem: sufficient conditions of the value function Eq.\ (\ref{compressed-value}) under lossy compression being a valid Bellman equation that converges monotonically to a fixed point. Such convergence implies a bounded loss between the original and the compressed value functions.

Since $\tilde{V}^{\pi}(\tilde{b})$ and $\bar{V}^{\pi}(b)$ return the same value if $\tilde{b}$ is the compression of $b$, it is much easier to investigate the latter, which only involves adding an extra linear operator $A$ to the original value function. 
\begin{lemma}
Let $\bar{b}=(b^{\top}A)^{\top}$ and $\hat{b}=\frac{\bar{b}}{\|\bar{b}\|_1}$ (i.e. normalised $\bar{b}$), For $V$ and $\bar{V}$ defined in Eq. (\ref{bellman}) and (\ref{Bellman-A}) respectively, $\bar{H}\bar{V}(b)=\|b^{\top}A\|_1H\bar{V}(\hat{b})$.
\end{lemma}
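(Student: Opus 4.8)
The plan is to prove the identity by expanding both sides using the explicit forms of the two backup operators and then exploiting a single structural property of $\bar{V}$: positive homogeneity of degree one. Reading the operators off Eq.~(\ref{value-function-matrix}) and Eq.~(\ref{v_bar}), we have $H\bar{V}(\hat{b})=\max_{a}[\,\hat{b}^{\top}R_{\cdot,a}+\eta\sum_z\bar{V}(\hat{b}^{\top}T^{a,z})\,]$ and $\bar{H}\bar{V}(b)=\max_{a}[\,b^{\top}AR_{\cdot,a}+\eta\sum_z\bar{V}(b^{\top}AT^{a,z})\,]$. These two expressions differ only in that every occurrence of the row vector $\hat{b}^{\top}$ in the first is replaced by $b^{\top}A$ in the second, so the whole argument reduces to tracking a single scalar factor.

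First I would record the defining relation $\hat{b}^{\top}=b^{\top}A/\|b^{\top}A\|_1$, equivalently $b^{\top}A=\|b^{\top}A\|_1\,\hat{b}^{\top}$, and substitute it into $\bar{H}\bar{V}(b)$. This turns the reward term into $\|b^{\top}A\|_1\,\hat{b}^{\top}R_{\cdot,a}$ and the argument of $\bar{V}$ inside the sum into $\|b^{\top}A\|_1\,\hat{b}^{\top}T^{a,z}$.

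The crux is then to pull the scalar $c\defeq\|b^{\top}A\|_1$ out of $\bar{V}$, i.e.\ to use $\bar{V}(c\,x)=c\,\bar{V}(x)$ for $c\ge 0$. This is where the only real work lies: I must justify that $\bar{V}$ is positively homogeneous of degree one. This is inherited from the piece-wise linear convex form of value functions, since each iterate is a maximum of linear maps $x\mapsto x^{\top}\alpha$, which is positively homogeneous, and the backup $\bar{H}$ preserves the property (replacing $b$ by $cb$ with $c>0$ scales both the immediate reward and, by the inductive hypothesis, every discounted successor term by $c$, and $c$ then factors through the $\max$). Hence every $\bar{V}_n$, and the limit $\bar{V}$, is positively homogeneous of degree one on all of $\mathbb{R}^{n}$; this global statement is what is needed, because the arguments $b^{\top}AT^{a,z}$ need not be normalised beliefs and may even carry negative entries when $A=FF^{\dag}$ fails to preserve nonnegativity.

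With homogeneity in hand the computation closes immediately: the common factor $c=\|b^{\top}A\|_1$ appears in the reward term and in every $\bar{V}(\cdot)$ term, so it can be extracted from the bracket and then from the $\max_a$, giving $\bar{H}\bar{V}(b)=\|b^{\top}A\|_1\max_{a}[\,\hat{b}^{\top}R_{\cdot,a}+\eta\sum_z\bar{V}(\hat{b}^{\top}T^{a,z})\,]=\|b^{\top}A\|_1\,H\bar{V}(\hat{b})$. The main obstacle is thus conceptual rather than computational, namely cleanly establishing positive homogeneity and noting that it is positive (not full) homogeneity that is required, since $c=\|b^{\top}A\|_1\ge 0$; the only edge case worth flagging is $\|b^{\top}A\|_1=0$, where both sides vanish and the choice of $\hat{b}$ is immaterial.
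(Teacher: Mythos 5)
Your proposal is correct and follows essentially the same route as the paper: the paper writes $\bar{V}$ in its $\alpha$-vector form $\max_{\bar{\alpha}}x^{\top}\bar{\alpha}$ and pulls the scalar $\|b^{\top}A\|_1$ straight out of the max, which is exactly the positive homogeneity you establish (just stated implicitly rather than proved by induction). Your extra care in justifying homogeneity on all of $\mathbb{R}^n$ and in flagging the $\|b^{\top}A\|_1=0$ edge case is sound but not a different argument.
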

\begin{proof}
\begin{eqnarray}
\bar{H}\bar{V}(b)&=&\max_{a,\bar{\alpha}}\left(b^{\top}AR_{\cdot,a}+\eta b^{\top}A\sum_z T^{a,z}\bar{\alpha}\right)\nonumber\\
&=&\|b^{\top}A\|_1 \max_{a,\bar{\alpha}} \left(\hat{b}^{\top}R_{\cdot,a}+\eta \hat{b}^{\top}\sum_z T^{a,z}\bar{\alpha}\right)\nonumber\\
&=&\|b^{\top}A\|_1H\bar{V}(\hat{b})\nonumber
\end{eqnarray}
\qed
\end{proof}
After this, Lemma 1 can be used to prove the following lemma.
\begin{lemma}
If $\eta\|A\|_{\infty}<1$, $\bar{V}$ defined in Eq. (\ref{v_bar}) and (\ref{Bellman-A}) is contractive, i.e. for two given value functions $U_1$ and $U_2$ and the recursion $\bar{H}$ it holds that
\[
\|\bar{H}U_1-\bar{H}U_2\|_{\infty}\leq\beta\|U_1-U_2\|_{\infty}
\]
with $0<\beta<1$ and $\|\cdot\|_{\infty}$ the supreme norm. 
\end{lemma}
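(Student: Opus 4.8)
The plan is to use Lemma~1 to trade the compressed backup $\bar{H}$ for the ordinary Bellman backup $H$ evaluated at a renormalised belief, and then to absorb the renormalisation constant into the factor $\|A\|_{\infty}$. First I would note that the identity of Lemma~1 uses no special property of $\bar{V}$: its proof only exploits linearity in the belief argument together with the definition of $\bar{H}$, so for an \emph{arbitrary} value function $U$ we have $\bar{H}U(b)=\|b^{\top}A\|_1\,HU(\hat{b})$ with $\hat{b}=\frac{(b^{\top}A)^{\top}}{\|b^{\top}A\|_1}$. Applying this to $U_1$ and $U_2$ at the same $b$ (hence the same $\hat{b}$) and subtracting gives
\[
\bar{H}U_1(b)-\bar{H}U_2(b)=\|b^{\top}A\|_1\bigl(HU_1(\hat{b})-HU_2(\hat{b})\bigr).
\]

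Next I would invoke the classical $\eta$-contraction of the exact backup, $|HU_1(\hat{b})-HU_2(\hat{b})|\le\eta\|U_1-U_2\|_{\infty}$. Concretely: assuming without loss of generality $HU_1(\hat{b})\ge HU_2(\hat{b})$, I fix the action and the observation-indexed $\alpha$-vectors that are optimal for $U_1$ at $\hat{b}$, and lower-bound $HU_2(\hat{b})$ by reusing the same action together with the $U_2$-optimal vectors. The immediate-reward terms cancel, leaving $\eta$ times a sum over observations of the differences of $U_1$ and $U_2$ evaluated at the one-step updated points; each such difference is bounded by $\|U_1-U_2\|_{\infty}$, and the associated observation weights sum to one, which yields the factor $\eta$.

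The prefactor is then bounded directly. Because $b\in\mathcal{B}$ is a genuine belief we have $b\ge 0$ and $\|b\|_1=1$, so by the triangle inequality and interchanging the order of summation
\[
\|b^{\top}A\|_1\le\sum_i b_i\sum_j|A_{ij}|\le\Bigl(\max_i\sum_j|A_{ij}|\Bigr)\|b\|_1=\|A\|_{\infty}.
\]
Combining the three displays and taking the supremum over $b\in\mathcal{B}$ gives $\|\bar{H}U_1-\bar{H}U_2\|_{\infty}\le\eta\|A\|_{\infty}\,\|U_1-U_2\|_{\infty}$, so the claim holds with $\beta=\eta\|A\|_{\infty}$, which is strictly less than one precisely by the hypothesis $\eta\|A\|_{\infty}<1$.

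The main obstacle is the second step, and specifically the point that $\hat{b}$ is only guaranteed to satisfy $\|\hat{b}\|_1=1$, not to lie in the probability simplex. When $A$ is nonnegative (as for the NMF-based compressions, where $F$ and $F^{\dag}$ are nonnegative) the vector $b^{\top}A$ is nonnegative, $\hat{b}$ is a bona fide belief, and the classical contraction argument with its convex observation weights applies verbatim. For a general, possibly signed $A$ (such as the one arising from VDC) the ``probability of observation'' weights may be negative and need not sum to one, so the convex-combination estimate must be replaced by an $\ell_1$-mass-tracking argument: one uses that $\sum_z T^{a,z}$ is row-stochastic to show the one-step observation map does not increase $\ell_1$ mass, and then controls the difference of the two value functions at each signed updated point. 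Making this last control tight for piecewise-linear $U_1,U_2$ is the delicate part, and it is where I would concentrate the effort (or, failing a sign-free bound, where I would restrict the statement to nonnegative $A$).
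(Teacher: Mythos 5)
Your proof is essentially the paper's own: Lemma~1 converts $\bar{H}U(b)$ into $\|b^{\top}A\|_1\,HU(\hat{b})$, the standard $\eta$-contraction of the exact backup $H$ is applied at $\hat{b}$, and the prefactor is bounded via $\|b^{\top}A\|_1\leq\|b\|_1\|A\|_{\infty}=\|A\|_{\infty}$, yielding $\beta=\eta\|A\|_{\infty}<1$. The subtlety you flag in your last paragraph --- that for a signed $A$ the renormalised point $\hat{b}$ has unit $\ell_1$ norm but need not lie in the probability simplex, so the classical contraction bound and the step $\|HU_1(\hat{b})-HU_2(\hat{b})\|_{\infty}\leq\|HU_1-HU_2\|_{\infty}$ do not apply verbatim --- is a genuine gap, but it is present in the paper's proof as well (which makes exactly that leap without comment), so your treatment is if anything more careful than the original; restricting to nonnegative $A$, as you suggest, is the clean way to close it and covers the NMF-based compressions the lemma is actually used for.
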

\begin{proof}
\begin{eqnarray}
\|\bar{H}U_1-\bar{H}U_2\|_{\infty}&=&\|\bar{H}U_1(b)-\bar{H}U_2(b)\|_{\infty}\nonumber\\
&=&\|b^{\top}A\|_1\|HU_1(\hat{b})-HU_2(\hat{b})\|_{\infty}\nonumber\\
&\leq&\|A\|_{\infty}\|HU_1-HU_2\|_{\infty}\nonumber\\
&\leq&\eta\|A\|_{\infty}\|U_1-U_2\|_{\infty}\nonumber
\end{eqnarray}
where we utilise the matrix norm property $\|bA\|_1\leq\|b\|_1\|A\|_{\infty}$, and the facts that $\|b\|_1=1$ and for a standard Bellman recursion, $\|HU_1-HU_2\|_{\infty}\leq\eta\|U_1-U_2\|_{\infty}$.\qed
\end{proof}
The contraction property ensures that the vector space defined by the compressed value function is complete. Therefore, the space of such value functions together with the supreme norm form a Banach space, and the Banach fixed-point theorem ensures that a single fixed point exists, to which the value recursion always converges \citep{Puterman05}. 
\begin{lemma}
$\bar{V}$ defined in Eq. (\ref{v_bar}) and (\ref{Bellman-A}) is isotonic, i.e. for two given value functions $U_1$ and $U_2$ and the recursion $\bar{H}$ it holds that
\[
U_1\leq U_2 \Rightarrow \bar{H}U_1 \leq \bar{H}U_2
\]
\end{lemma}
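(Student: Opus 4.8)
The plan is to reduce the claim to the isotonicity of the \emph{standard} Bellman operator $H$ by reusing the factorisation established in Lemma 1. That lemma rewrites a single application of $\bar{H}$ as a nonnegative scalar multiple of an application of $H$ at the normalised vector $\hat{b}$; since that scalar is an $\ell_1$ norm and hence nonnegative, it cannot reverse any order that $H$ already preserves. So the whole argument hinges on the fact that $H$ maps pointwise-ordered value functions to pointwise-ordered value functions.

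Concretely, I would first apply Lemma 1 to each input, obtaining $\bar{H}U_1(b)=\|b^{\top}A\|_1\,HU_1(\hat{b})$ and $\bar{H}U_2(b)=\|b^{\top}A\|_1\,HU_2(\hat{b})$ for every $b\in\mathcal{B}$, disposing of the degenerate case $\|b^{\top}A\|_1=0$ separately (there both sides vanish). Because $\|b^{\top}A\|_1\geq 0$, it then suffices to prove $HU_1(\hat{b})\leq HU_2(\hat{b})$. For this I would invoke (and, for completeness, sketch) the standard isotonicity of $H$: writing $HU(\hat{b})=\max_{a}[\hat{b}^{\top}R_{\cdot,a}+\eta\sum_z\max_{\alpha}\hat{b}^{\top}T^{a,z}\alpha]$ and using that $\hat{b}^{\top}T^{a,z}$ is a nonnegative sub-normalised row vector, namely $\hat{b}^{\top}T^{a,z}=\mathrm{Pr}(z|a,\hat{b})(\hat{b}^{a,z})^{\top}$ with $\hat{b}^{a,z}$ the updated belief, each inner maximisation equals $\mathrm{Pr}(z|a,\hat{b})\,U(\hat{b}^{a,z})$ and is therefore monotone in $U$ since $\mathrm{Pr}(z|a,\hat{b})\geq 0$. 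Summing over $z$ with $\eta>0$, adding the common reward term, and taking the maximum over $a$ all preserve the inequality, so $U_1\leq U_2$ yields $HU_1(\hat{b})\leq HU_2(\hat{b})$ and hence $\bar{H}U_1(b)\leq\bar{H}U_2(b)$.

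The step I expect to be the real obstacle is not order-preservation \emph{per se} but the tacit requirement that $\hat{b}$ (equivalently $b^{\top}A$) be a genuine belief, i.e. that $b^{\top}A\geq 0$ componentwise for every reachable $b$. Only then are $\hat{b}^{a,z}$ a valid probability vector and $\mathrm{Pr}(z|a,\hat{b})\geq 0$, which is exactly what the monotonicity argument consumes; without it the row vector $(b^{\top}A)T^{a,z}$ may acquire negative entries and the inner $\max_{\alpha}$ ceases to be monotone in $U$. This amounts to demanding that $A=FF^{\dag}$ preserve nonnegativity of the belief, which holds automatically for the nonnegative factorisations (e.g.\ when $A=FF^{\top}$ with $F\geq 0$) but can fail when the compression basis carries negative entries. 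I would therefore state this nonnegativity condition explicitly and note that its violation --- precisely the VDC situation analysed in \S4 --- is what undermines isotonicity together with the other properties on which convergence rests.
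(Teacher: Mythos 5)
Your proof is correct, but it takes a different route from the paper's. The paper argues directly on $\bar{H}$: it fixes the maximising actions $a_1,a_2$ at a point $b$, notes $\bar{H}^{a_1}U_2(b)\leq\bar{H}^{a_2}U_2(b)$, and then compares the two backups term by term using the maximising $\alpha$-vectors of $U_1$ and $U_2$ at the (unnormalised) successor points $\bar{b}^{a_1,z}=(b^{\top}AT^{a_1,z})^{\top}$. You instead factor $\bar{H}U(b)=\|b^{\top}A\|_1\,HU(\hat{b})$ via Lemma 1 and delegate everything to the isotonicity of the standard operator $H$ at the normalised point $\hat{b}$, using only that the scalar is nonnegative. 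Both are valid; yours is more economical because it reuses machinery the paper already deploys for the contraction proof (Lemma 2 uses exactly this factorisation). What your route buys in addition is that it surfaces a hypothesis the paper's proof consumes silently: the paper's first implication, $U_1\leq U_2\Rightarrow U_1(\bar{b}^{a_1,z})\leq U_2(\bar{b}^{a_1,z})$, tacitly requires the ordering of $U_1$ and $U_2$ to hold at points $\bar{b}^{a_1,z}$ that are genuine (scaled) beliefs only when $b^{\top}A\geq0$; your explicit nonnegativity caveat, and its connection to the VDC failure mode of \S4 and Lemma 4, is exactly the right observation and is arguably a sharper statement of the lemma's scope than the paper gives. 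The paper's direct argument, for its part, avoids the need to handle the degenerate case $\|b^{\top}A\|_1=0$ and does not formally invoke the standard isotonicity of $H$ as an external fact.
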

\begin{proof}
Let $\bar{H}U_1(b)=\bar{H}^{a_1}U_1(b)$ and $\bar{H}U_2(b)=\bar{H}^{a_2}U_2(b)$, with $a_1$ and $a_2$ denoting the actions maximising $\bar{H}U_1$ and $\bar{H}U_2$ at point $b$, respectively. 
Using this definition, we have $\bar{H}^{a_1}U_2(b)\leq \bar{H}^{a_2}U_2(b)$.

Let $\bar{b}^{a,z}=(b^{\top}AT^{a,z})^{\top}$, and $\bar{\alpha}_1^{b,a,z}$ and $\bar{\alpha}_2^{b,a,z}$ be the $\alpha$-vectors maximising the value function $U_1$ and $U_2$ at $\bar{b}^{a,z}$ respectively. The following holds
\begin{eqnarray}
U_1\leq U_2& \Rightarrow& U_1(\bar{b}^{a_1,z})\leq U_2(\bar{b}^{a_1,z}), \ \ \forall b,z \nonumber\\
&\Rightarrow&  b^{\top} AT^{a_1,z}\bar{\alpha}_1^{b,a_1,z} \leq b^{\top} AT^{a_1,z}\bar{\alpha}_2^{b,a_1,z}, \ \forall b,z\nonumber\\
& \Rightarrow& b^{\top} AR_{\cdot,a_1}+\eta\sum_z b^{\top} AT^{a_1,z}\bar{\alpha}_1^{b,a_1,z}\nonumber\\
& & \ \ \ \ \ \leq\ b^{\top} AR_{\cdot,a_1}+\eta\sum_z b^{\top} AT^{a_1,z}\bar{\alpha}_2^{b,a_1,z},\ \forall b\nonumber\\
& \Rightarrow& \bar{H}^{a_1}U_1(b)\leq \bar{H}^{a_1}U_2(b) \leq \bar{H}^{a_2}U_2(b), \ \forall b\nonumber\\
& \Rightarrow& \bar{H}U_1(b)\leq \bar{H}U_2(b),\ \ \forall b \nonumber\\
&\Rightarrow& \bar{H}U_1 \leq \bar{H}U_2\nonumber
\end{eqnarray}
\qed
\end{proof}
The isotonic property of the value function guarantees that value iteration converges monotonically.

Nevertheless, the above lemmas only consider exact value backup, which is intractable in practice. If approximate backup are taken into account, pruning is an inevitable procedure to ensure an efficient size of the $\alpha$-vector set \citep{Zhang05}.  Moreover, working in the compressed belief space, the backup operation is based on $\tilde{V}$ instead of $\bar{V}$. Since an essential step in pruning is to check the domination of an $\alpha$-vector by others, the following lemma gives a further constraint for a compressed POMDP to be efficiently solvable.

\begin{lemma}
Let $\tilde{b}$ be a compressed belief with respect to a compression function $F$, as defined in Eq. (\ref{compressed-belief}). For two arbitrary $\alpha$-vectors $\tilde{\alpha}_1$ and $\tilde{\alpha}_2$ (corresponding to the compressed value function), $\tilde{\alpha}_1\geq\tilde{\alpha}_2 \Rightarrow \tilde{b}^{\top}\tilde{\alpha}_1\geq \tilde{b}^{\top}\tilde{\alpha}_2$, $\forall \tilde{b}$, holds for an arbitrary valid POMDP, iff $F\geq0$. 
\end{lemma}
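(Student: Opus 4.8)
The plan is to reduce the stated implication to a single statement about the sign of a bilinear form, and then dispatch the two directions of the ``iff'' separately. First I would set $\delta = \tilde{\alpha}_1 - \tilde{\alpha}_2$ and observe that the hypothesis $\tilde{\alpha}_1 \geq \tilde{\alpha}_2$ is precisely $\delta \geq 0$ (componentwise), while the conclusion $\tilde{b}^{\top}\tilde{\alpha}_1 \geq \tilde{b}^{\top}\tilde{\alpha}_2$ is precisely $\tilde{b}^{\top}\delta \geq 0$. Using the definition of the compressed belief in Eq.~(\ref{compressed-belief}), $\tilde{b}^{\top} = b^{\top}F$, this becomes $b^{\top}F\delta \geq 0$. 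Since, as $\tilde{\alpha}_1,\tilde{\alpha}_2$ range over all pairs with $\tilde{\alpha}_1 \geq \tilde{\alpha}_2$, the difference $\delta$ ranges over the entire nonnegative orthant of $\mathbb{R}^{k}$, the whole lemma is equivalent to the claim: $b^{\top}F\delta \geq 0$ for every reachable belief $b$ (over all valid POMDPs) and every $\delta \geq 0$, if and only if $F \geq 0$.

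For the sufficiency direction I would argue directly. A belief $b$ is a probability distribution, hence $b \geq 0$; if in addition $F \geq 0$ and $\delta \geq 0$, then $F\delta \geq 0$ as a product of a nonnegative matrix with a nonnegative vector, and consequently $b^{\top}(F\delta) \geq 0$. This holds for every belief and every admissible $\delta$, uniformly across all valid POMDPs, which establishes the domination-preserving property.

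For the necessity direction I would argue by contraposition: assume $F$ has a negative entry $F_{ij} < 0$ and exhibit a single valid POMDP for which domination fails. The natural choice is the degenerate belief $b = e_i$ concentrated on state $s_i$, together with $\delta = e_j$ (equivalently $\tilde{\alpha}_1 = e_j$ and $\tilde{\alpha}_2 = \mathbf{0}$, which satisfy $\tilde{\alpha}_1 \geq \tilde{\alpha}_2$). Then $\tilde{b}^{\top}\delta = e_i^{\top}F e_j = F_{ij} < 0$, so $\tilde{b}^{\top}\tilde{\alpha}_1 < \tilde{b}^{\top}\tilde{\alpha}_2$ despite $\tilde{\alpha}_1 \geq \tilde{\alpha}_2$, contradicting the property. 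Ranging $(i,j)$ over all index pairs then shows that the property can hold for every POMDP only if no entry of $F$ is negative, i.e.\ $F \geq 0$.

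The step I expect to require the most care is the interpretation of ``for an arbitrary valid POMDP'' in the necessity argument, since the belief $b = e_i$ and the vector $\delta = e_j$ must actually be realisable. For $\delta$ this is granted by the lemma's quantification over arbitrary $\alpha$-vectors. For $b$, I would justify that a simplex-vertex belief $e_i$ is reachable in at least one valid POMDP, for instance by taking the initial belief $b_0 = e_i$, which is an admissible distribution, so that $e_i \in \mathcal{B}$. Because the ``iff'' asserts the property for \emph{every} valid POMDP, exhibiting this single instance suffices to break it whenever $F_{ij} < 0$, whereas the sufficiency direction already covers all POMDPs uniformly. This asymmetry in how the universal-over-POMDPs quantifier is used in the two directions is the only genuinely delicate point; the algebra itself is immediate.
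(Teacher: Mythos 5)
The paper omits this proof entirely (``The proof is straightforward, hence is omitted here''), and your argument is exactly the straightforward one intended: reduce to the sign of $b^{\top}F\delta$ for $\delta\geq 0$, nonnegativity of $b$, $F$, $\delta$ for sufficiency, and a vertex belief $e_i$ with $\delta=e_j$ for necessity. Both directions are correct, and you rightly flag (and adequately resolve) the only delicate point, namely that the counterexample objects must be realisable in \emph{some} valid POMDP, which is all the ``for an arbitrary valid POMDP'' quantifier requires.
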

The proof is straightforward, hence is omitted here.

\section{Value-Directed Compression}

Lossless VDC is designed to seek a linear compression function $F$ that satisfies Eq.\ (\ref{compression}), which is a more restricted condition compared  to our Theorem 2.  \cite{Poupart05} proposed that such an $F$ can be obtained by iteratively exploring the Krylov subspace $\mathrm{Kr}(\{T^{a,z}\}_{a\in\mathcal{A},z\in\mathcal{Z}},R)$ to find a Krylov basis (with $k$ linearly independent column vectors). We can summarise this process as (i) initialising $F$ to the linearly independent columns of $R$, and (ii) in each iteration, multiplying every column ($F_i$) of $F$ with every $T^{a,z}$ (as $T^{a,z}F_i$), and appending the obtained vector to $F$ if it is linearly independent of the columns of the current $F$. The process ends when no more columns can be added.
\begin{table}[t]
\centering
\begin{tabular}{rl}
\multicolumn{2}{l}{{\bf Algorithm 1:} Lossless/Lossy Value-Directed Compression \citep[Chapter~4]{Poupart05}}\\
\hline\\
1: &input: $R$,\ \  $\{T^{a,z}\}_{a\in\mathcal{A},z\in\mathcal{Z}}$,\ \  $k$\ \  {\small/*truncation parameter, lossy VDC only*/} \\
2:  & $F\leftarrow\emptyset$\\
3:  & $\mathcal{C}\leftarrow R$\ \  {\small/*candidate column set*/} \\
4:  &repeat \\
5a: &\hspace{1cm}$c\leftarrow\mathcal{C}[\mathrm{first}]$\ \hspace{3.1cm} {\small/*lossless VDC*/}\\
5b: &\hspace{1cm}$c\leftarrow \arg\max_{y\in \mathcal{C}}\min_{x}\|y-Fx\|$\ \ \hspace{0.35cm}  {\small/*lossy VDC*/}\\
6: &\hspace{1cm}$F\leftarrow[F,c]$\\
7: &\hspace{1cm}for each $(a,z)\in\mathcal{A}\times\mathcal{Z}$\\
8: &\hspace{2cm}$\mathcal{C} \leftarrow[\mathcal{C},T^{a,z}c]$\\
9: &\hspace{1cm}remove columns linearly dependent to $F$ from $\mathcal{C}$\\
10: &until $\mathcal{C}=\emptyset$ or $\mathrm{length}(F)=k$\ \ {\small/*truncation, lossy VDC only*/}\\
11:&solve Eq. (\ref{compression})\ \ {\small/*lossless VDC*/} or Eq. (\ref{VDC-lossy})\ \ {\small/*lossy VDC*/}\\
&\hspace{0.75cm}to obtain $\tilde{R}$ and $\{\tilde{T}^{a,z}\}_{a\in\mathcal{A},z\in\mathcal{Z}}$\\
12: &return $F$, $\tilde{R}$, $\{\tilde{T}^{a,z}\}_{a\in\mathcal{A},z\in\mathcal{Z}}$\\
\end{tabular}
\end{table}

Clearly, a lossless compression is achievable only if the Krylov subspace is low-rank. In a more general case, it was suggested that one can greedily select $k$ basis vectors in a forward-search manner to approximately minimise the residual errors of their predictions on the remaining vectors in the Krylov subspace, which is known as lossy VDC \citep{Poupart05}.  After obtaining $F$, $\tilde{R}$ and $\tilde{T}^{a,z}$ can be computed by either solving Eq. (\ref{compression}) for lossless VDC, or the following regression problem for lossy VDC.
\begin{equation}\label{VDC-lossy}
\tilde{R}=\arg\min_{\hat{R}}\|R-F\hat{R}\|_{\mathrm{F}}\mbox{\ \ and\ \;}\tilde{T}^{a,z}=\arg\min_{\hat{T}^{a,z}}\|T^{a,z}F-F\hat{T}^{a,z}\|_{\mathrm{F}} \mbox{\ \ $\forall a\in\mathcal{A}$, $z\in\mathcal{Z}$}
\end{equation}
where $\|\cdot\|_{\mathrm{F}}$ denotes the Frobenius norm. 

For the convenience of future discussions, we list the pseudo-code for loss- less and lossy VDC in Algorithm 1. Note here, we adapt the representation of the lossless VDC algorithm to make it more comparable to lossy VDC, but it works exactly in the same way as the original algorithm presented in \citep{Poupart05}.
In standard numerical computation libraries (e.g.\ LAPACK \citep{Anderson99}), an overdetermined linear equation is usually solved as a least-squares problem, which suggests that Eq.\ (\ref{compression}) and Eq.\ (\ref{VDC-lossy}) are treated the same in practice, and both lead to a solution in the form of Eq.\ (\ref{compression-solution}), with $F^{\dag}$ being the pseudo-inverse of $F$ in this case. Therefore, comparing the two VDC algorithms, one can see that the essential difference between them is their column selection strategy.

\subsection{Deficiency of VDC}\label{sec:dificiency-vdc}

Firstly, VDC does not guarantee a nonnegative $F$ unless $R$ is nonnegative. Therefore, according to Lemma 4, if PBVI is applied to the compressed problem, some $\alpha$-vectors may be mistakenly pruned out, resulting in the algorithm converging to a non-optimal policy. Secondly, as $F$ is unregularised in VDC, $\|FF^{\dag}\|_{\infty}$ can be arbitrarily large. In the worst case, the compressed value function may diverge to infinity. These arguments seem to contradict the proofs in \citep{Poupart05}, so we give more detailed explanations as follows. 

Theorem 1 is derived by assuming that the compression is lossless. However, due to numerical errors in practical computations, a totally `error-free' solution never really exists. Especially for lossless VDC, because of the lack of consideration of system conditioning, it tends to be less robust to numerical errors than lossy VDC.\footnote{Detailed analysis on the numerical stabilities of VDC is out of the main scope of this paper, but can be found in the supplementary material.} In other words, residual errors always occur in the compressed value function. Furthermore, although \cite{Poupart05} also developed an error bound for lossy VDC, that:
\begin{equation}\label{VBound}
\|V^{*}-F\tilde{V}^{*}\|_{\infty}\leq\frac{1}{1-\eta}\epsilon_R+\frac{\eta\epsilon_T}{1-\eta}|\mathcal{Z}|\|\tilde{V}^*\|_{\infty}
\end{equation}
where $\epsilon_R=\|R-F\tilde{R}\|_{\infty}$, $\epsilon_T=\max_{a,z}\|T^{a,z}F-F\tilde{T}^{a,z}\|_{\infty}$, in the case where $\tilde{V}^*$ is a diverging function, the bound itself is infinitely large.

\subsection{Empirical Evidence}

To support our argument on the deficiencies of VDC, we investigated its performance on two benchmark problems, Coffee \citep{Boutilier96} and Hallway2 \citep{Littman95}.
\begin{figure}[p]
\includegraphics[width=0.95\linewidth]{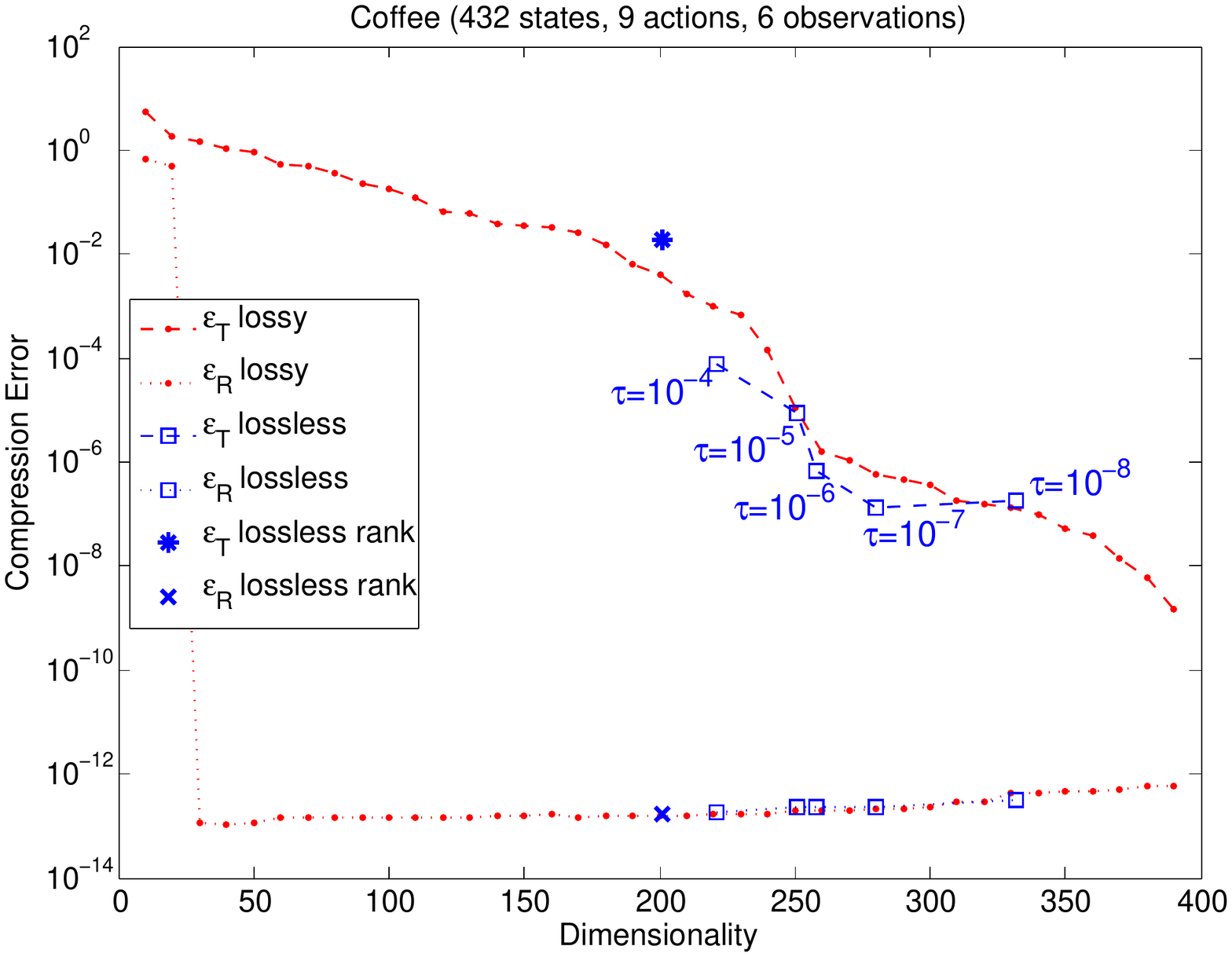}
\includegraphics[width=0.95\linewidth]{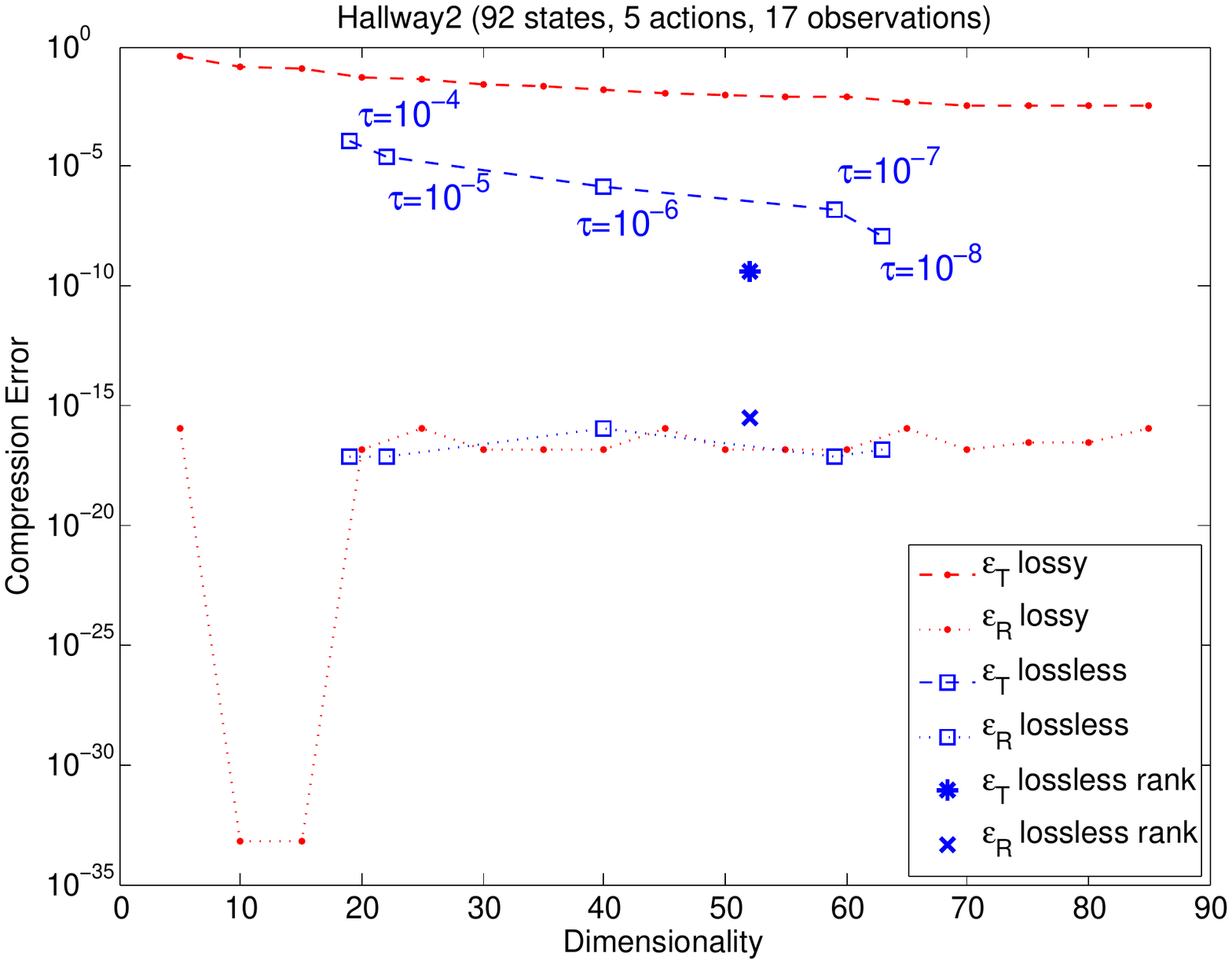}
\caption{Compression errors of lossless and lossy VDC on Coffee and Hallway2.}\label{fig:coffee-hallway2-error}
\end{figure}
\begin{figure}[p]
\includegraphics[width=0.95\linewidth]{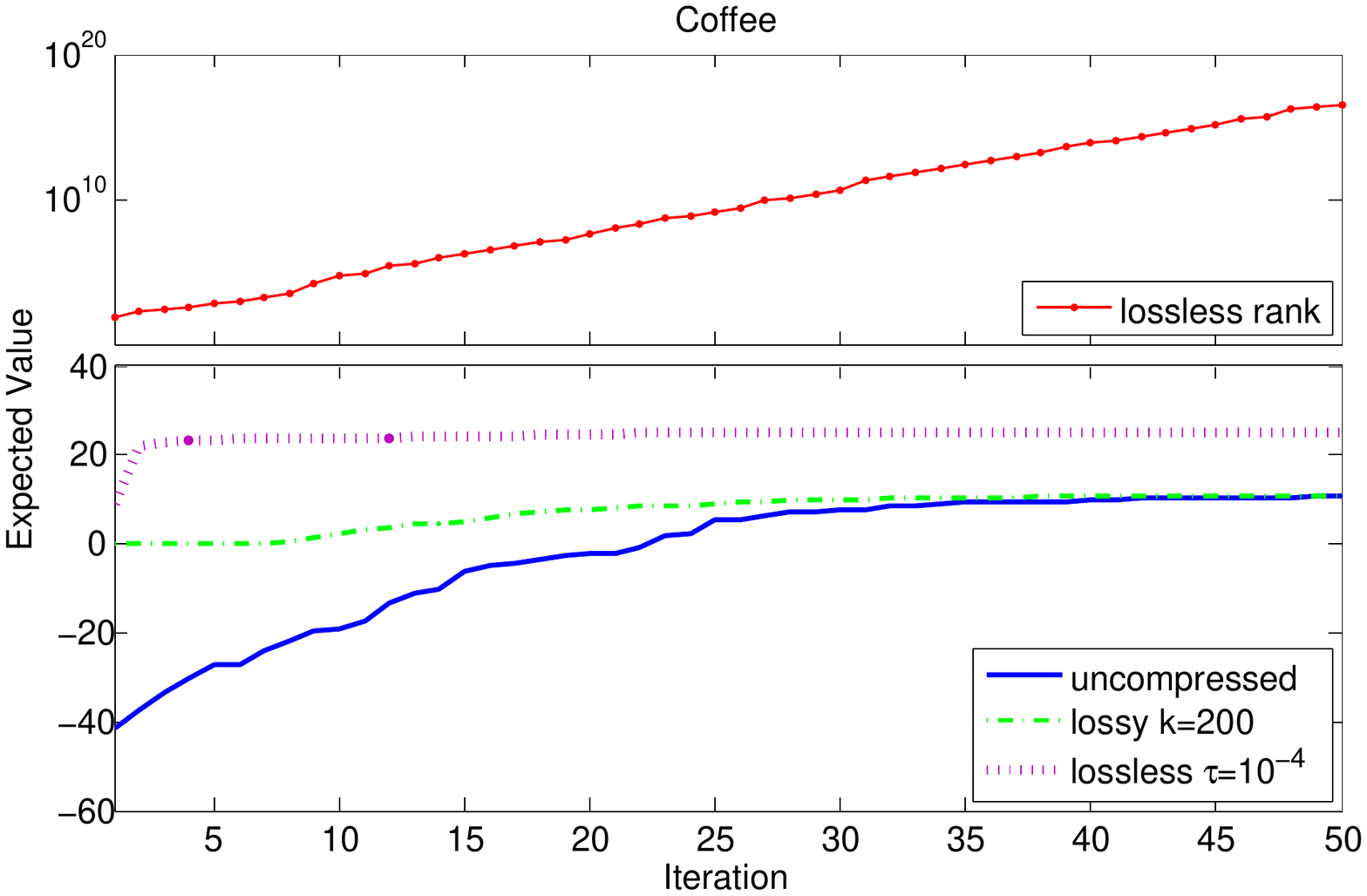}
\includegraphics[width=0.95\linewidth]{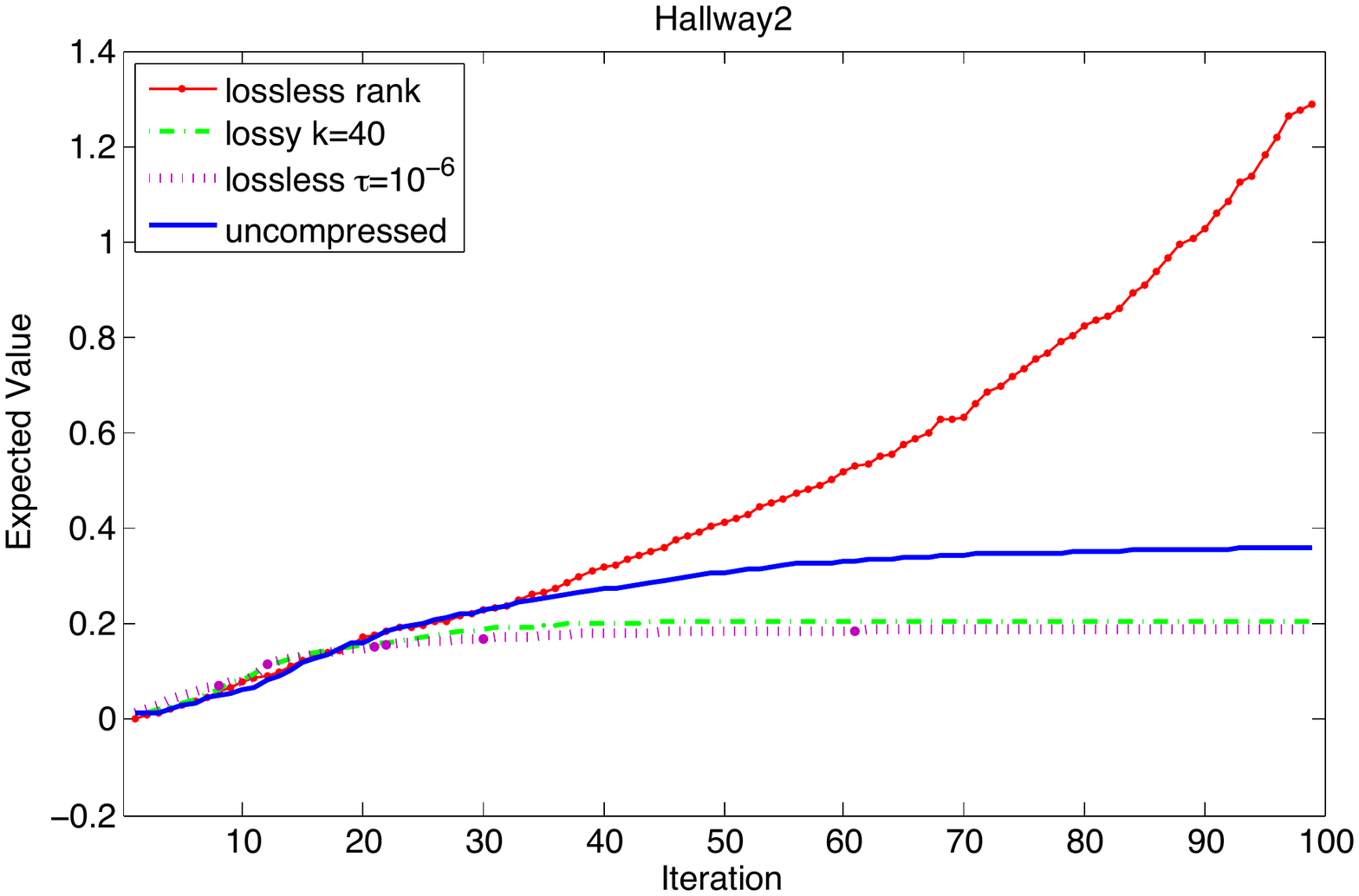}
\caption{Expected value during value iterations on Coffee and Hallway2.}\label{fig:coffee-hallway2-value}
\end{figure}
\begin{table}[p]
\centering
\begin{tabular}{|c|c|c|c|c|}
\cline{2-5}
\multicolumn{1}{c|}{Coffee} & Lossless/rank & Lossless/$\tau=10^{-4}$ & Lossy/$k=200$ & Uncompressed\\
\hline
Reward &$-5.52\pm2.11$ &$-14.36\pm 2.34$&$10.88\pm0.17$ &$11.16\pm0.14$\\
\hline
\multicolumn{5}{c}{\vspace{0.05cm}}\\
\cline{2-5}
\multicolumn{1}{c|}{Hallway2} & Lossless/rank & Lossless/$\tau=10^{-6}$ & Lossy/$k=40$ & Uncompressed\\
\hline
Reward &$0.01\pm0.00$&$0.23\pm0.02$ &$0.24\pm0.04$&$0.34\pm0.02$ \\
\hline
\end{tabular}
\caption{Average sampled rewards on Coffee and Hallway2.}\label{tab:reward-vdc}
\end{table}

In the implementation of VDC, we experiment with two ways of judging the linear dependence between a new obtained vector $c$ and the existing columns of $F$. The first method is to set a threshold $\tau$ for the least-squares residual $r=\|c-Fw\|_2$, where $w=\arg\max_{\hat{w}}\|c-F\hat{w}\|_2$. Concretely, $c$ will be appended to $F$ only if $r\geq\tau$. The second way of doing this is to check whether $\mathrm{rank}([F,c])>\mathrm{rank}(F)$ with $\mathrm{rank}(\cdot)$ denoting the numerical rank of a matrix. The quantities $\epsilon_R$ and $\epsilon_T$ as defined in Eq.\ (\ref{VBound}) are taken as measures of compression error. The compression quality of VDC with respect to different residual thresholds $\tau$ (for lossless VDC) and truncation levels $k$ (for lossy VDC) on the two benchmark problems are illustrated in Figure \ref{fig:coffee-hallway2-error}. Note that in the Coffee problem, the Krylov iteration for the rank-based lossless VDC finishes when 201 columns in $F$ are obtained, however, there is still a significant residual error $\epsilon_T$ at this point. This is an example of the numerical instability issue of VDC.

Some example points in Figure \ref{fig:coffee-hallway2-error} are selected to examine the policy quality obtained from the corresponding compressed POMDPs. To roughly ensure the same compression level for lossless and lossy VDC, we choose lossless VDC with $\tau=10^{-4}$ (221 dimensions) and lossy VDC with $k=200$ for Coffee, and lossless VDC with $\tau=10^{-6}$ (40 dimensions), and lossy VDC with $k=40$ for Hallway2. In addition, we evaluate the rank-based VDC (201 dimensions for Coffee and 52 dimensions for Hallway2) and the original POMDPs for both problems as well.  Perseus \citep{Spaan05} is employed to solve the compressed and uncompressed POMDPs here. The the expected value growth for each algorithm during the value iteration procedures are shown in Figure \ref{fig:coffee-hallway2-value}, where all the experiments are based on 5000 sampled belief points. We also sample 1000 decision trajectories for each learned policy to compute an average reward, which gives an insight into the actual quality of the obtained policy, since the expected values for compressed problems can be unreliable according to our discussion in Section \ref{sec:dificiency-vdc}. The above policy learning and evaluation procedure is repeated five times for each task, and Table \ref{tab:reward-vdc} shows the means and standard deviations of the average rewards.

These two problems demonstrate all the deficiencies of VDC mentioned above. Firstly, as shown in Figure \ref{fig:coffee-hallway2-value}, the rank-based VDC results in a diverging value function in both tasks. Especially in Hallway2, even when  the compression error is in the $10^{-10}$ level, it may still cause a significant (possibly unbounded) loss in the value function. Secondly, as can be seen in the Coffee problem, the policy learning for lossless VDC with $\tau=10^{-4}$ converges to an unreasonably high value, but the corresponding average sampled reward (in Table \ref{tab:reward-vdc}) is extremely low. This is due to the pruning procedure being confused by the negative elements in $F$. More concretely speaking, some $\alpha$-vectors that should be dominated by all the others are mistaken as the dominating ones.

The above theoretical and empirical analysis suggests that the performance of VDC is not guaranteed in practical usage, hence we stop further experiments with it in this paper.

\section{Orthogonal NMF for Belief Compression}

Orthogonal NMF (O-NMF) based belief compression, introduced by \cite{Li07},  explores an alternative direction by taking advantage of the possible low-rankness of the reachable belief space $\mathcal{B}$ defined by a POMDP.  It seeks a nonnegative factorisation of a sampled set of beliefs $B=[b_1,b_2,\ldots,b_m]$ subject to an orthogonal constraint, such as:
\begin{equation}\label{onmf}
B\approx F\tilde{B} \ \ \mathrm{s.t.}\ \  FF^{\top}=I, \ F\geq0\ \mathrm{ and }\ \tilde{B}\geq0 
\end{equation}
where $\tilde{B}$ denotes the set of the compressed beliefs and $I$ is the identity matrix. The compressed reward and transition matrices are then constructed by substituting $F^{\dag}=F^{\top}$ into Eq. (\ref{compression-solution}).

\subsection{Deficiency of O-NMF Belief Compression}\label{onmf-deficiency}
An obvious deficiency of the above formulation is that the orthogonal constraint can never be satisfied in practice, since the compression matrix $F$ is of low-rank. Therefore, the algorithm proposed in  \citep{Li07} actually only solves the following optimisation problem.
\begin{equation}\label{onmf2}
\min_{F\geq0,\tilde{B}\geq0} \|B-F\tilde{B}\|^2_{\mathrm{F}}+\lambda\|I-FF^{\top}\|^2_{\mathrm{F}}
\end{equation}
where $\lambda\geq0$ is a coefficient balancing the weights of the two loss functions in the objective.\footnote{Although \cite{Li07} suggested that in particular way of selecting $\lambda$, Eq.\ (\ref{onmf2}) is equivalent to Eq.\ (\ref{onmf}), it is well-known that $FF^{\top}=I$ is impossible for any matrix $F\in\mathbb{R}^{n\times k}$ with $k<n$, as $\mathrm{rank}(FF^{\top})=\mathrm{rank}(F)\leq k<n=\mathrm{rank}(I)$.}

Nevertheless, when compared to VDC, O-NMF belief compression has the superiority that the nonnegative $F$ ensures that PBVI works properly (Lemma 4). Moreover, minimising the Frobenius norm difference between $FF^{\top}$ and $I$ approximately controls the scale of $\|FF^{\top}\|_{\infty}$ due to the equivalence of norms in finite dimension, which preserves the convergence of the compressed value function  (Lemma 2). The fact that essentially makes O-NMF belief compression work can be understood as follows. Assume we have a compressed belief $\tilde{b}$ such that the original belief $b=F\tilde{b}$. Then the compressed value function for a given policy $\pi$ can be computed as:
\begin{eqnarray}
\tilde{V}^{\pi}(\tilde{b})&=&\tilde{b}^{\top}\tilde{R}_{\cdot,\pi(\tilde{b})}+\eta\sum_z\tilde{V}^{\pi}(\tilde{b}^{\top}\tilde{T}^{\pi(\tilde{b}),z})\nonumber\\
&=&b^{\top}R_{\cdot,\pi(b)}+\eta\sum_z\tilde{V}^{\pi}(b^{\top}T^{\pi(b),z}F)\nonumber\\
&=&b^{\top}R_{\cdot,\pi(b)}+\eta\sum_z\bar{V}^{\pi}(b^{\top}T^{\pi(b),z})
\end{eqnarray}
Note here, although the compressed belief $\tilde{b}$ here is not obtained as in Eq.\ (\ref{compressed-belief}), the corresponding $\tilde{V}$ can still be related to $\bar{V}$.
Letting $b^{\pi(b),z}=b^{\top}T^{\pi(b),z}$, we have:
\begin{equation}\label{nmf-loss}
V^{\pi}(b)-\tilde{V}^{\pi}(\tilde{b})=\eta\sum_z [V^{\pi}(b^{\pi(b),z})-\bar{V}^{\pi}(b^{\pi(b),z})]
\end{equation}
Going further, we can prove the following bound on the difference between $\bar{V}$ and $V$. 
\begin{theorem}
For value functions $V$ and $\bar{V}$ defined in Eq.\ (\ref{value-function-matrix}) and Eq.\ (\ref{v_bar}) respectively, if $\eta\|A\|_{\infty}<1$, then the following bound holds.
\[
	\|V-\bar{V}\|_{\infty}\leq\frac{\|I-A\|_{\infty}}{1- \eta\|A\|_{\infty}}\left(\|R\|_{\infty}+\eta|\mathcal{Z}|\|V^*\|_{\infty}\right)
\]
\end{theorem}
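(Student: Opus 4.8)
The plan is to use that $V$ and $\bar{V}$ are the fixed points of the backup operators $H$ and $\bar{H}$, so that $\|V-\bar{V}\|_{\infty}=\|HV-\bar{H}\bar{V}\|_{\infty}$, and then insert the intermediate quantity $\bar{H}V$ through the triangle inequality:
\[
\|V-\bar{V}\|_{\infty}\leq\|HV-\bar{H}V\|_{\infty}+\|\bar{H}V-\bar{H}\bar{V}\|_{\infty}.
\]
The second term is controlled immediately by Lemma 2: since $\eta\|A\|_{\infty}<1$, the operator $\bar{H}$ is a contraction of modulus $\eta\|A\|_{\infty}$, so $\|\bar{H}V-\bar{H}\bar{V}\|_{\infty}\leq\eta\|A\|_{\infty}\|V-\bar{V}\|_{\infty}$. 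Moving this term to the left and dividing by $1-\eta\|A\|_{\infty}$ already produces the denominator of the claimed bound, and reduces everything to estimating the single-step residual $\|HV-\bar{H}V\|_{\infty}$, i.e.\ to comparing the two operators applied to the \emph{same} value function $V$.

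For that comparison I would write $HV(b)$ and $\bar{H}V(b)$ in their $\alpha$-vector (PWLC) forms, the only difference being the extra factor $A$ that $\bar{H}$ inserts in front of $R_{\cdot,a}$ and of each $T^{a,z}$. Using the elementary inequality $|\max_a f_a-\max_a g_a|\leq\max_a|f_a-g_a|$ to remove the outer maximisation over actions, and the same inequality again for the inner maximisation over $\alpha$-vectors in each observation term, the difference collapses into a reward piece $b^{\top}(I-A)R_{\cdot,a}$ and, for each $z$, a transition piece $b^{\top}(I-A)T^{a,z}\alpha$. Each is a scalar of the form $b^{\top}(I-A)M\,y$, which I would bound via $|b^{\top}(I-A)M\,y|\leq\|b^{\top}(I-A)\|_{1}\|M\|_{\infty}\|y\|_{\infty}$, then invoke the norm property $\|b^{\top}(I-A)\|_{1}\leq\|b\|_{1}\|I-A\|_{\infty}=\|I-A\|_{\infty}$ (already used for Lemma 2) with $\|b\|_{1}=1$, and use $\|T^{a,z}\|_{\infty}\leq1$ for every $z$, which holds because its row sums are $\sum_{j}T(s_j|a,s_i)\Omega(z|a,s_j)\leq1$. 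Summing the $|\mathcal{Z}|$ transition terms produces the factor $|\mathcal{Z}|$.

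The step I expect to be most delicate is the inner maximisation over $\alpha$-vectors, since it needs a Lipschitz-type estimate for the PWLC function together with a uniform bound on the size of the relevant $\alpha$-vectors. Concretely I would argue that for $V(x)=\max_{\alpha}x^{\top}\alpha$ one has $|V(p)-V(q)|\leq\|p-q\|_{1}\max_{\alpha}\|\alpha\|_{\infty}$, and then bound $\max_{\alpha}\|\alpha\|_{\infty}$ by $\|V^*\|_{\infty}$, using that each component of an $\alpha$-vector is a value attained at a corner belief and therefore cannot exceed $\|V^*\|_{\infty}$ in absolute value. With $p^{\top}=b^{\top}T^{a,z}$ and $q^{\top}=b^{\top}AT^{a,z}$ this gives the transition contribution $\eta|\mathcal{Z}|\,\|I-A\|_{\infty}\|V^*\|_{\infty}$, while the reward contribution is $\|I-A\|_{\infty}\|R\|_{\infty}$; factoring out $\|I-A\|_{\infty}$ and dividing by $1-\eta\|A\|_{\infty}$ reproduces the stated inequality. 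The only genuine subtleties are making the $\alpha$-vector magnitude bound rigorous and peeling off the two maximisations in the correct order, as everything else is a routine application of the norm inequalities already established for Lemmas 1 and 2.
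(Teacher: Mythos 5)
Your proposal is correct and follows essentially the same route as the paper: the triangle inequality through the intermediate quantity $\bar{H}V$ is exactly the paper's add-and-subtract of $\eta\sum_z AT^{a,z}\alpha$ at the $\alpha$-vector level, the contraction bound $\eta\|A\|_{\infty}\|V-\bar{V}\|_{\infty}$ on $\|\bar{H}V-\bar{H}\bar{V}\|_{\infty}$ matches the paper's third term, and the residual $\|HV-\bar{H}V\|_{\infty}$ is bounded by the same two pieces $\|I-A\|_{\infty}\|R\|_{\infty}$ and $\eta|\mathcal{Z}|\|I-A\|_{\infty}\|V^*\|_{\infty}$ using the same norm inequalities and the bound $\|\alpha\|_{\infty}\leq\|V^*\|_{\infty}$.
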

\begin{proof}
\begin{eqnarray}
\|V-\bar{V}\|_{\infty}&=&\max_{\pi}\|V^{\pi}-\bar{V}^{\pi}\|_{\infty} \nonumber\\
&\leq&\max_{\alpha,\bar{\alpha},a}\|R_{\cdot,a}+\eta\sum_z T^{a,z}\alpha-AR_{\cdot,a}-\eta\sum_z AT^{a,z}\bar{\alpha}\|_{\infty}\nonumber\\
&=&\max_{\alpha,\bar{\alpha},a}\|(I-A)R_{\cdot,a} + \eta\sum_z T^{a,z}\alpha-\eta\sum_z AT^{a,z}\alpha \nonumber\\
& &\mbox{\hspace{2cm}} + \eta\sum_z AT^{a,z}\alpha-\eta\sum_z AT^{a,z}\bar{\alpha}\|_{\infty}\nonumber\\
&\leq&\|(I-A)R\|_{\infty}+\max_{\alpha,a}\|\eta (I-A)\sum_zT^{a,z}\alpha\|_{\infty}\nonumber\\
& &\mbox{\hspace{2cm}} + \max_{\alpha,\bar{\alpha},a}\|\eta A\sum_z T^{a,z}(\alpha-\bar{\alpha})\|_{\infty}\nonumber\\
&\leq& \|I-A\|_{\infty}\|R\|_{\infty}+\eta|\mathcal{Z}| \|I-A\|_{\infty}\|V^*\|_{\infty}+\eta\|A\|_{\infty}\|V-\bar{V}\|_{\infty}\nonumber\\
&\leq&\frac{\|I-A\|_{\infty}}{1- \eta\|A\|_{\infty}}\left(\|R\|_{\infty}+\eta|\mathcal{Z}|\|V^*\|_{\infty}\right)\nonumber
\end{eqnarray}
where we apply Lemma 2 and the facts that $\|T^{a,z}\|_{\infty} \leq1$ and $\|V\|_{\infty}\leq\|V^*\|_{\infty}$.
\qed
\end{proof}

Theorem 3 implies that the O-NMF method (with $A=FF^{\top}$) minimises the upper bound of the value loss caused by compression (as $\|I-A\|_{\infty}\leq\sqrt{n}\|I-A\|_{\mathrm{F}}$). However, such a bound can be quite loose in practice. The drawback of O-NMF belief compression is that it does not directly relate the error in compressing a belief $b$ to the loss in its corresponding compressed value function, because Eq.\ (\ref{nmf-loss}) indicates that even if $b=F\tilde{b}$ holds, $V^{\pi}(b)-\tilde{V}^{\pi}(\tilde{b})\neq0$ since $V^{\pi}(b')= \bar{V}^{\pi}(b')$ does not hold in general for 
the successor beliefs $b'$ transited to from $b$.

\subsection{On the Locality Preserving NMF Belief Compression}

\cite{Theocharous10} proposed an extension to the O-NMF method, which is formulated as follows. Firstly, the sampled belief set $B$ is sub-sampled into a smaller set $B'$ by only including those points in the original $B$ that are at least $\delta$ apart in terms of Euclidean distance, where $\delta$ is a pre-fixed threshold.  Then, in finding a factorisation of $B'$, the original Frobenius norm loss is replaced by an unnormalised KL-divergence loss between $B'$ and $F\tilde{B}'$. After this, an extra risk is introduced, which measures the distance (with respect to symmetric KL-divergence) between each pair of the compressed beliefs weighted by a neighbourhood graph among the original belief points.  The neighbourhood graph is constructed by connecting every belief point with its $K$-nearest neighbourhood (KNN). Finally, similar to O-NMF, $F^{\dag}$ can be obtained by approximating $I\approx FF^{\dag}$. The insight behind this algorithm is that the neighbourhood graph inspired risk function forces two compressed beliefs to be close to each other if their corresponding original beliefs are so. Hence, it is named ``locality preserving'' NMF (LP-NMF) belief compression.

According to our discussion in Section \ref{onmf-deficiency}, LP-NMF shares both the advantages and the drawbacks of the O-NMF method. In addition, the locality preserving property is only intuitively motivated, as the initial closeness among the belief points will be lost soon during value iteration due to recursive influence of $T^{a,z}$. Furthermore, KL-divergence does not tend to be a good measure for selecting the linear compression matrix $F$. As shown in \citep{Poupart00}, the quality of the policies resulting from approximate belief monitoring can be significantly lower than the original policy even when the KL-divergence remains fairly small, whilst policy quality can be unaffected when KL-divergence is large.

\section{Projective NMF for Belief Compression}
To address the drawbacks of O-NMF belief compression, we propose a novel projective NMF belief compression algorithm motivated by Theorem 2 and Lemma 2. 

Again, we start from an ideal case. Assume that for a POMDP, the reachable belief space $\mathcal{B}$ is row-rank (say $\mathrm{rank}(\mathcal{B})=k\ll n$). Let $B$ be a sampled set of beliefs, and $\mathcal{B}\subset\mathrm{span}(B)$ (i.e.\ $\mathrm{rank}(B)=k$). If we can find a matrix $F\in\mathbb{R}^{n\times k}$ and $F\geq0$, such that:
\begin{equation}
B=FF^{\top}B \ \ \mbox{and} \ \ \|FF^{\top}\|_{\infty}<\frac{1}{\eta} 
\end{equation}
letting $F^{\dag}=F^{\top}$ and substituting it into Eq.\ (\ref{compression-solution}) gives a lossless compression of the original POMDP. The correctness of this proposition is easy to check. As $B$ contains a linear basis of $\mathcal{B}$, there exists a weight vector $w$ such that $b=Bw$, $\forall b\in\mathcal{B}$. Therefore, $FF^{\top}b=FF^{\top}Bw=Bw=b$, $\forall b\in\mathcal{B}$ holds. 
Moving to the imperfect case, if for a subset of the sampled beliefs $\bar{B}\subset B$, $\bar{B}=FF^{\top}\bar{B}$ is achievable, then $\forall b\in\mathrm{span}(\bar{B})$, $b=FF^{\top}b$. 
Hence, the insight here is that we attempt to reduce the compression error directly from the value function, instead of minimising a loose error bound as O-NMF belief compression does.

Intuitively, the optimisation problem to seek the $F$ for the proposed method could be formulated as:
\begin{equation}
\min_{F\geq 0}\ \|B-FF^{\top}B\|_1\ \mathrm{\ \ s.t.\ \ }\ \|FF^{\top}\|_{\infty}\leq\frac{1}{\eta}
\end{equation}
However, we suggest using the Frobenius norm instead of the $L_1$/$L_{\infty}$ matrix norms for efficiency purposes,  as minimising the latter requires repeatedly solving linear programming problems, which is computationally much more expensive. Hence, the problem becomes:
\begin{equation}\label{p-nmf}
\min_{F\geq 0}\ \frac{1}{2}\|B-FF^{\top}B\|^2_{\mathrm{F}}+\frac{\lambda}{2}\|FF^{\top}\|^2_{\mathrm{F}}
\end{equation}
where $\lambda$ is a regularisation coefficient to approximately control the scale of $FF^{\top}$. 
Our preliminary experiments indicate that $\lambda$ can be empirically selected and that $\eta\|FF^{\top}\|_{\infty}$ being slightly greater than 1 usually does not affect the convergence of value iteration in practice. 

Eq.\ (\ref{p-nmf}) forms a regularised projective NMF problem. As suggested by \cite{Yang10}, this class of problems are convex and thus can be solved via gradient descent as follows.
Define the objective function:
\begin{equation}
g(F) = \frac{1}{2}\|B-FF^{\top}B\|^2_{\mathrm{F}}+\frac{\lambda}{2}\|FF^{\top}\|^2_{\mathrm{F}}
\end{equation}
The the constrained gradient of $g$ for $F$ is given by:
\begin{equation}
\frac{\partial g}{\partial F_{i,j}}= -2(BB^{\top}F)_{i,j}+(FF^{\top}BB^{\top}F)_{i,j}+(BB^{\top}FF^{\top}F)_{i,j}+2\lambda(FF^{\top}F)_{i,j}
\end{equation}
After this, we can construct the additive update rule for minimisation:
\begin{equation}\label{update-rule}
F_{i,j}\leftarrow F_{i,j}-\zeta_{i,j}\frac{\partial g}{\partial F_{i,j}}
\end{equation}
where $\zeta_{i,j}$ is a positive step size. In order to keep $F_{i,j}$ staying nonnegative, $\zeta_{i,j}$ can be selected as:
\begin{equation}\label{step-size}
\zeta_{i,j}=\frac{F_{i,j}}{(FF^{\top}BB^{\top}F)_{i,j}+(BB^{\top}FF^{\top}F)_{i,j}+2\lambda(FF^{\top}F)_{i,j}}
\end{equation}
Substituting Eq. (\ref{step-size}) into Eq. (\ref{update-rule}), we have:
\begin{equation}
F_{i,j}\leftarrow F_{i,j}\frac{2(BB^{\top}F)_{i,j}}{(FF^{\top}BB^{\top}F)_{i,j}+(BB^{\top}FF^{\top}F)_{i,j}+2\lambda(FF^{\top}F)_{i,j}}
\end{equation}
We will refer to this method as P-NMF  belief compression in the remainder of this paper.

\section{Experimental Results}

In previous literature, belief compression is usually conceptually demonstrated on small-scale benchmark problems. Here, we are interested in seeing its actual performance on POMDPs with large numbers of states.
We use empirical methods to investigate the following two questions:
\begin{enumerate}
\item whether P-NMF could improve over O-NMF and LP-NMF in policy quality as expected, since it is more focused on error reduction in the value function; 
\item whether belief compression in general is a preferable technique to state-of-the-art POMDP solvers, or under what situation it is so. 
\end{enumerate}

\subsection{Experiments on Benchmark Problems}

The P-NMF, O-NMF and LP-NMF belief compression methods are now compared on four POMDP problems, including UnderwaterNavigation \citep{Kurniawati08}, LifeSurvey \citep{Smith07}, RockSample[7,8] \citep{Smith04}, and a spoken dialogue system problem (ComplexGoalDialog) that is an updated version
\footnote{The only update here is to add an extra initial state $s_0$ standing for the beginning of a dialogue, for which the transition probabilities to the other states $s$ for all system actions are set to be proportional to $e^{-|s|}$ with self-transition eliminated, where $|s|$ stands for the number of user goals contained in $s$. The purpose of such an modification is to yield more natural conversations.} 
of the complex user goal dialogue problem described in \citep{Crook11}. 

Perseus \citep{Spaan05} is employed to solve the compressed POMDPs. In addition, SARSOP \citep{Kurniawati08} is used to give a measure of the rewards that can be achieved for uncompressed problems, as Perseus tends to be unable to solve a POMDP with more than a few thousand states in acceptable time. SARSOP is a state-of-the-art PBVI POMDP solver that achieves its efficiency by restricting itself to only sample belief points near the subset of those reachable from the initial belief under optimal sequences of actions, where the sampling region is controlled by heuristic exploration of sampling paths. Hence, its performance here also gives us an insight into the practical competitiveness of belief compression in solving large-scale POMDPs.

In the following experiments, the P-NMF, O-NMF and LP-NMF belief compression algorithms are implemented in Matlab. The parameters for P-NMF and LP-NMF are empirically tuned based on preliminary experiments. For O-NMF, we follow \citeauthor{Li07}'s (\citeyear{Li07}) method of automatically selecting $\lambda$ to enforce the orthogonal constraint (though the orthogonality is unachievable).  The compression and policy optimisation processes are based on 20,000 belief points randomly sampled by Perseus, in all the tasks except RockSample[7,8] where 100,000 beliefs are sampled to maintain proper performance of the belief compression methods. The computing time for all algorithms is measured using CPU time (seconds) on a computer with 2$\times$Six-Cores 2.4GHz CPUs and 128GB memory. The compression levels $k$ are selected empirically in consideration of both compression errors and computational complexities. 
We sample 1000 decision trajectories to compute an average reward for each policy obtained in each task. The policy learning and reward sampling procedures are repeated five times for each task to calculate a mean and standard deviation, as summarised in Table \ref{tab:benchmark}. 
\begin{table}[t]
\begin{tabular}{lr|cccc}
\hline
Task & Algorithm  & $k$ & Reward & Time (P) & Time (C)  \\
\hline
\hline
ComplexGoalDialog & P-NMF &100&23.5$\pm$0.3&500 & $7.3\times10^3$\\ 
\raisebox{3pt}{\tiny($|\mathcal{S}|:4097$, $|\mathcal{A}|:23$, $|\mathcal{Z}|:49$)}& O-NMF &100&23.1$\pm$0.2 &500& $1.6\times10^3$\\
& LP-NMF &100&13.2$\pm$0.3& 500& $4.2\times10^3$ \\
& SARSOP &n/a&24.2$\pm$0.3&600 & n/a\\
\hline
LifeSurvey & P-NMF &200&89.9$\pm$5.8 &1,500 & $1.9\times10^4$\\
\raisebox{3pt}{\tiny($|\mathcal{S}|:7841$, $|\mathcal{A}|:7$, $|\mathcal{Z}|:28$)}& O-NMF &200&73.3$\pm$12.9&600 & $6.5\times10^3$\\
& LP-NMF &200&$(-8.1\pm1.2)\times10^3$ & 150& $1.4\times10^4$ \\
& SARSOP &n/a&105.7$\pm$1.1&200& n/a\\
\hline
UnderwaterNavigation & P-NMF &100&530.3$\pm$37.8&180 & $2.9\times10^3$\\
\raisebox{3pt}{\tiny($|\mathcal{S}|:2653$, $|\mathcal{A}|:6$, $|\mathcal{Z}|:103$)}& O-NMF &100&354.5$\pm$31.4&300&$1.0\times10^3$\\
& LP-NMF &100&489.7$\pm$283.2&300& $0.9\times10^3$\\
& SARSOP &n/a&733.5$\pm$5.0&160 & n/a\\
\hline
RockSample[7,8] & P-NMF &100&13.2$\pm$0.3&6,000&$2.9\times10^5$\\
\raisebox{3pt}{\tiny($|\mathcal{S}|:12545$, $|\mathcal{A}|:13$, $|\mathcal{Z}|:2$)}& O-NMF &100&7.4$\pm$0.0&60&$4.4\times10^4$\\
& LP-NMF &100&7.4$\pm$0.0& 70 & $3.0\times10^3$\\
& SARSOP &n/a&20.9$\pm$0.5&130&n/a\\
\hline
\end{tabular}
\caption{Performance of belief compression on benchmark problems. Time (P) and Time (C) stand for policy optimisation time and compression time (averaged in five trials), respectively.}\label{tab:benchmark}
\end{table}

It can be found that the proposed P-NMF method consistently outperforms O-NMF and LP-NMF with respect to the rewards obtained in all the four tasks. The performance of LP-NMF compares unfavourably to that of the other two belief compression methods in most of the tasks, and is sometimes very unstable, e.g.\ large variances in its average sampled rewards can be observed in the UnderwaterNavigation problem. Note that, for Rocksample[7,8] one can regard O-NMF and LP-NMF as unable to solve the problem properly, as only 1 $\alpha$-vector is produced in their policies.

Nevertheless, none of the above belief compression methods can guarantee to work on all POMDPs. For example, we also apply them to another two benchmark problems, Homecare \citep{Kurniawati08} and Fourth CIT \citep{Cassandra97}, where they all fail to produce a meaningful policy. The failures could partially due to the limitations of Perseus, e.g.\ for Fourth CIT ($|\mathcal{S}|=1052$, $|\mathcal{A}|=4$,  $|\mathcal{Z}|=28$), Perseus itself fails to solve it, even though 100,000 belief points are sampled. Another reason could be the lack of low-rankness in the reachable belief space of a problem, e.g.\ for Homecare ($|\mathcal{S}|=5408$, $|\mathcal{A}|=9$,  $|\mathcal{Z}|=928$), if we sample a $B$ of 10,000 belief points and analyse its singular values, it can be found that there are about 2,000 of them with non-trivial values. Therefore, a too low-dimensional compression cannot sufficiently approximate the problem, whilst a compression with too many dimensions is computationally intractable for both Perseus and the compression algorithm itself. 

Moreover, considering either time expense or policy quality, belief compression in general tends to be less competitive than SARSOP, which motivates further investigation in the next section.

\subsection{Belief Compression vs.\ the State-of-the-art: A Further Comparison}

Although SARSOP \citep{Kurniawati08} demonstrates its impressive efficiency and effectiveness in the above experiments, it can usually be noticed that its intermediate belief sampling procedure (a heuristic search-based sampling strategy) results in considerably higher space complexities than standard PBVI algorithms (such as Perseus). This situation tends to be worse for problems with more actions and observations. 
Hence, it suggests a potential advantage for belief compression, as SARSOP may easily run out of memory for some large-scale problems. 
To further explore the relative merits of SARSOP and belief compression methods, we investigate two further dialogue problems as follows. 

\subsubsection{POMDP Construction}

Firstly, a hand-crafted rule-based spoken dialogue system was set up in a laboratory environment. The system's domain is restaurant search and it supports complex user goals \citep{Crook10}. Each simple goal (which forms part of a complex user goal) contains two pieces of information (2 attributes: food-type and location, with 4 and 3 values respectively). The combination of the attribute values forms $4\times3=12$ distinct goals. Each state of the system is drawn from the power-set of the 12 possible goals, i.e.\ $2^{12}=4096$ states in total.
We define 64 system actions and 807 user observations, with respect to different expressions of the state information. After this, 161 dialogues were collected from volunteers, and were manually transcribed and annotated. Then we used the collected data to train a POMDP. As the rule-based system assumes the user goal to be unchangeable during a dialogue, we preserve this setting in the first version of the POMDP  (called ``Dialog/Identity'') by setting the transition matrices to the identity matrix for all system actions, except those for the initial state $s_0$. (For the POMDP model trained here we take the state with an empty goal set as $s_0$.) For $s_0$ the transition probabilities to other states for all actions are defined to be proportional to the probability mass of a Poisson distribution over the number of goals contained in the target state. A non-zero reward is only given to those actions that present goal information to the user, where $R(s,a)=10$ if the information presented by $a$ is fully contained in $s$, $R(s,a)=-10$ otherwise. The observation probabilities are modelled using the exponential family, for which the parameter is trained by maximising the regularised log-likelihood on the data. To be concrete, we use feature vectors to represent the states, actions and observations, as $\phi_s(s)$,  $\phi_a(a)$ and $\phi_z(z)$ respectively, where binary values are used to indicate the occurrences of certain attribute values, and extra fields are introduced for actions and observations indexing the dialogue act type. Then we let the joint feature representation of a $(z,s,a)$-tuple be the tensor product of the corresponding individual feature vectors as $\phi(z,s,a)=\phi_s(s)\otimes\phi_a(a)\otimes\phi_z(z)$, and formulate the observation probabilities as 
\begin{equation} \Omega(z|s,a)=\frac{\exp[w^{\top}\phi(z,s,a)]}{\sum_{z'}\exp[w^{\top}\phi(z',s,a)]}
\end{equation}
 where $w$ is the parameter to be trained on the data. Finally, we eliminate those observation probabilities below the threshold $10^{-6}$ and re-normalise the distributions, to achieve a sparser problem for space efficiency purposes. 
\begin{table}[t]
\begin{tabular}{lr|cccc}
\hline
Task & Algorithm &  $k$ & Reward & Time (P) & Time (C) \\
\hline
\hline
Dialog/Identity & P-NMF &100&41.3&3,000& $3.0\times10^4$\\
\raisebox{3pt}{\tiny($|\mathcal{S}|:4096$, $|\mathcal{A}|:64$, $|\mathcal{Z}|:807$)}& SARSOP &n/a&49.0&2,712&n/a\\
\hline
Dialog/Browsing & P-NMF &100&23.1&3,000 & $3.0\times10^4$\\
\raisebox{3pt}{\tiny($|\mathcal{S}|:4096$, $|\mathcal{A}|:64$, $|\mathcal{Z}|:807$)}& SARSOP &n/a &--&$\infty$&n/a\\
\hline

\end{tabular}
\caption{SARSOP vs\. belief compression on two challenging dialogue problems. P-NMF compressed problems are solved using Perseus, with 100,000 sampled belief points for compression and policy optimisation.}\label{tab:dialog}
\end{table}

\subsubsection{Results}

The performance of SARSOP and P-NMF belief compression for this problem is compared in the upper half of Table \ref{tab:dialog} (Dialog/Identity), where SARSOP runs out of memory\footnote{Here we set a 100GB memory limit for SARSOP to reserve sufficient resource for normal operating system activities.} within its first 30 iterations (after the initialisation step), but a promising policy is still obtained. A possible reason for this could be the identity transition matrices, which make the problem converge easily. Therefore, we slightly modified the previous setting to provide a further realistic challenge, in the problem ``Dialog/Browsing''. In this second version of the POMDP, instead of keeping the user goal fixed, we assume that a user will be able to request for alternative goals when its current goal is correctly presented by the system, and this process can last for an infinite number of turns (i.e.\ an infinite-horizon planning problem). This corresponds to dialogues where users are browsing through the possible entities (e.g.\ find out what Thai restuarants there are, and then search for the closest restaurant).

To enable such a setting, we re-define those transition probabilities $T(\cdot|a,s)$ that have $R(s,a)=10$ to be identical to $T(\cdot|a,s_0)$. The performance of SARSOP and the P-NMF method on this modified POMDP (Dialog/Browsing) is shown in the lower half of Table \ref{tab:dialog}. This time, SARSOP fails to finish in acceptable time, as it takes more than 15 hours to run each iteration when initialising the fast informed bound \citep{Hauskrech00}. On the contrary, the efficiency of P-NMF is much more preferable in this case. Furthermore, by looking into the sampled dialogue trajectories of the compressed problem, we found that the ratio between its correct decisions (reward 10) and incorrect decisions (reward -10) is approximately 3:1, which suggests a reasonable quality of the policy.

\section{Conclusion}
This paper introduces a theoretical framework to analyse linear belief compression techniques, under which the deficiencies of three existing algorithms are presented. The findings indicate that policy quality reduction resulting from the compression can be relieved if those deficiencies are properly revised, as demonstrated by a new proposed P-NMF model. However, the overall performance of belief compression techniques tends to be less competitive in comparison with a state-of-the-art POMDP solver, such as SARSOP. However, we show that under particular situations SARSOP may fail due to time or space complexities, whilst belief compression could provide a feasible solution. A further question posed here would be whether there exists a way of combining SARSOP and belief compression to achieve further efficiencies. Unfortunately, our preliminary answer is negative, as various underlying theories (e.g.\ the fast informed bound) in SARSOP's heuristics rely on beliefs being distributions. The possibility of applying alternative heuristics in solving compressed problems still requires further investigation.
\bibliography{my}{}
\bibliographystyle{spbasic}

\end{document}